\tikzset{every picture/.style={line width=0.75pt}} 
\newtheorem{theorem}{Theorem}
\newtheorem{definition}{Definition}
\newtheorem{lemma}{Lemma}
\newcommand{\R}{\mathbb{R}}
\newcommand{\f}{\mathcal{F}}
\newcommand{\argmin}{\arg\min}
\newcommand{\sequence}{B}
\newcommand{\arbgradient}{G}
\newcommand{\oracle}{\mathcal{O}}
\newcommand{\learner}{\mathcal{L}}
\newcommand{\eaves}{\mathcal{E}}
\newcommand{\domain}{\mathcal{D}}
\newcommand{\dataset}{D}
\newcommand{\lossfunction}{\mathcal{G}}
\newcommand{\datapoint}{d}
\newcommand{\constavgcost}{\psi}
\newcommand{\avgcostfunctionth}{\Psi}
\newcommand{\function}{f}
\newcommand{\functionvar}{x}
\newcommand{\lipschitz}{L}
\newcommand{\dimension}{d}
\newcommand{\timeindex}{k}
\newcommand{\noise}{\eta}
\newcommand{\query}{q}
\newcommand{\response}{r}
\newcommand{\noisetolerance}{\sigma}
\newcommand{\dpindex}{n}
\newcommand{\learnerestimate}{\hat{x}}
\newcommand{\advestimate}{\hat{z}}
\newcommand{\learnerinterval}{I}
\newcommand{\advinterval}{J}
\newcommand{\trajectory}{K}
\newcommand{\advupdatefn}{H}
\newcommand{\thresspsa}{\theta}
\newcommand{\thresfinite}{\chi}
\newcommand{\thresspsatrue}{{\phi}}
\newcommand{\parameterspsa}{\Theta}
\newcommand{\parameterperturb}{\omega}
\newcommand{\parameterchange}{\Gamma}
\newcommand{\stepspsa}{\kappa}
\newcommand{\constraintspsa}{\xi}
\newcommand{\approxspsa}{\tau}
\newcommand{\scalespsa}{\rho}
\newcommand{\spsatimestep}{n}
\newcommand{\spsatimehorizon}{K}
\newcommand{\transitionkernel}{P}
\newcommand{\expectation}{\mathbb{E}}
\newcommand{\probability}{\mathbb{P}}
\newcommand{\policy}{\nu}
\newcommand{\action}{u}
\newcommand{\policyspace}{\mathcal{T}}
\newcommand{\numqueries}{N}
\newcommand{\numoraclestates}{O}
\newcommand{\oraclesymbol}{O}
\newcommand{\queuesymbol}{E}
\newcommand{\queueavg}{\delta}
\newcommand{\avgprivacycost}{C}
\newcommand{\avglearningcost}{L}
\newcommand{\oraclestate}{W}
\newcommand{\numsuccupdates}{M}
\newcommand{\learnersymbol}{L}
\newcommand{\statespace}{\mathcal{S}}
\newcommand{\actionspace}{\mathcal{U}}
\newcommand{\valuefn}{V}
\newcommand{\stateactionfn}{Q}
\newcommand{\discountfactor}{\beta}
\newcommand{\discountfactoralt}{\alpha}
\newcommand{\statevar}{y}
\newcommand{\privacycost}{c}
\newcommand{\learningcost}{l}
\newcommand{\cost}{w}
\newcommand{\learningconstraint}{\Lambda}
\newcommand{\lagrange}{\lambda}
\newcommand{\avgcost}{J}
\newcommand{\randomizedfactor}{h}
\newcommand{\randomizedfactortrue}{p}
\newcommand{\indicator}{\mathbbm{1}}
\newcommand{\decisionrule}{u^{*}}
\newcommand{\successfn}{g}
\newcommand{\stepsize}{\mu}
\newcommand{\approxgradient}{\Delta}
\newcommand{\timehorizonspsaapprox}{T}
\newcommand{\indexone}{i}
\newcommand{\indextwo}{j}
\title{Controlling Federated Learning for Covertness}
\author{\name Adit Jain \email aj457@cornell.edu \\ \addr Department of Electrical and Computer Engineering, Cornell University
      \AND
      \name Vikram Krishnamurthy \email vikramk@cornell.edu \\
      \addr Department of Electrical and Computer Engineering, Cornell University
    }
\begin{document}
\maketitle
\begin{abstract}
A learner aims to minimize a function $\function$ by repeatedly querying a distributed oracle that provides noisy gradient evaluations. At the same time, the learner seeks to hide  $\arg\min \function$ from a malicious eavesdropper that observes the learner's queries. This paper considers the problem of \textit{covert} or \textit{learner-private} optimization, where the learner has to dynamically choose between learning and obfuscation by exploiting the stochasticity. The problem of controlling the stochastic gradient algorithm for covert optimization is modeled as a Markov decision process, and we show that the dynamic programming operator has a supermodular structure implying that the optimal policy has a monotone threshold structure. A computationally efficient policy gradient algorithm is proposed to search for the optimal querying policy without knowledge of the transition probabilities. As a practical application, our methods are demonstrated on a hate speech classification task in a federated setting where an eavesdropper can use the optimal weights to generate toxic content, which is more easily misclassified. Numerical results show that when the learner uses the optimal policy, an eavesdropper can only achieve a validation accuracy of $52\%$ with no information and $69\%$ when it has a public dataset with 10\% positive samples compared to $83\%$ when the learner employs a greedy policy.
\end{abstract}

\section{Introduction}
\subsection{Main Results}

A learner aims to minimize a function $\function$ by querying an oracle repeatedly. At times $\timeindex=0,1,\ldots$, the learner sends a query $\query_\timeindex$ to the oracle, and the oracle responds with a noisy gradient evaluation $\response_\timeindex$. Ideally, the learner would use this noisy gradient in a stochastic gradient algorithm to update its estimate of the minimizer, $\learnerestimate_\timeindex$ as:
$ \learnerestimate_{\timeindex+1} =  \learnerestimate_{\timeindex}  - \mu_\timeindex \,  \response_\timeindex$,
where $\mu_\timeindex$ is the step size and pose the next query as $\query_{\timeindex+1} = \learnerestimate_{\timeindex+1}$. However, the learner seeks to hide the $\arg\min \function$ from an eavesdropper. The eavesdropper observes the sequence of queries ($\query_\timeindex$) but does not observe the responses from the oracle. 
The eavesdropper is passive and does not directly affect the queries or the responses.
How can the learner perform stochastic gradient descent to learn $\arg\min \function$ but hide it from the eavesdropper?

Our approach is to control the stochastic gradient descent (SGD) using another stochastic gradient algorithm, namely a structured policy gradient algorithm (SPGA) that solves a resource allocation Markov decision process. The following two-step cross-coupled stochastic  gradient algorithm summarizes our approach:
\begin{align}\label{eq:coupled}
\begin{split}
  \textrm{Stochastic Gradient Descent:}  & \ \ \learnerestimate_{\timeindex +1} = \learnerestimate_{\timeindex} - \stepsize_\timeindex \arbgradient( \response_\timeindex, \statevar_\timeindex, \query_\timeindex ),\\ \textrm{Query using Policy from SPGA:} & \ \
   \query_{\timeindex+1} \sim  \transitionkernel( \policy(\statevar_{\timeindex+1}), \learnerestimate_{\timeindex+1} ),
   \end{split}
\end{align}
where $\timeindex$ is the time index, $\stepsize_\timeindex$ is the step size, $\query_\timeindex$ is the query and $\statevar_\timeindex$ is the system state.  $\arbgradient$ is a function that updates the estimate based on the noisy response, $\response_\timeindex$ (for ex. $\arbgradient$ can be $0$ when obfuscating and the noisy gradient $\response_\timeindex$ otherwise). $\transitionkernel$ is the transition probability kernel from the policy gradient, which decides whether to learn or obfuscate. The first equation updates the learner's $\arg\min$ estimate, $\learnerestimate_\timeindex$, and the second equation computes the next query $\query_{\timeindex + 1}$ using a policy $\policy$. 
\newpage
\textbf{Contributions:} 
\begin{wrapfigure}{r}{0.3\textwidth}
\centering
\vspace{-10mm}
    \subfloat[Threshold structure of the optimal policy with two actions.]{%
      \tikzset{every picture/.style={line width=0.75pt}} 

\begin{tikzpicture}[x=0.75pt,y=0.75pt,yscale=-1,xscale=1]

\draw   (228.45,128.82) -- (121.35,128.87) -- (121.32,59.02) ;
\draw    (155.77,79.98) -- (155.83,128.71) ;
\draw    (123.78,80) -- (119.26,80) ;
\draw    (124.22,128.87) -- (119.7,128.87) ;
\draw    (121.32,59.02) -- (121.37,55.11) ;
\draw [shift={(121.4,53.11)}, rotate = 90.84] [color={rgb, 255:red, 0; green, 0; blue, 0 }  ][line width=0.75]    (10.93,-3.29) .. controls (6.95,-1.4) and (3.31,-0.3) .. (0,0) .. controls (3.31,0.3) and (6.95,1.4) .. (10.93,3.29)   ;
\draw    (228.45,128.82) -- (229.31,128.84) ;
\draw [shift={(231.31,128.88)}, rotate = 181.38] [color={rgb, 255:red, 0; green, 0; blue, 0 }  ][line width=0.75]    (10.93,-3.29) .. controls (6.95,-1.4) and (3.31,-0.3) .. (0,0) .. controls (3.31,0.3) and (6.95,1.4) .. (10.93,3.29)   ;
\draw    (155.77,79.98) -- (215.86,79.98) ;

\draw (95.61,108.14) node [anchor=north west][inner sep=0.75pt]  [font=\small,rotate=-269.72] [align=left] {{\scriptsize Action}};
\draw (157.71,134.49) node [anchor=north west][inner sep=0.75pt]  [font=\scriptsize] [align=left] {State};
\draw (113.32,77.33) node [anchor=north west][inner sep=0.75pt]  [font=\tiny]  {$1$};
\draw (113.32,125.75) node [anchor=north west][inner sep=0.75pt]  [font=\tiny]  {$0$};

\end{tikzpicture}}\hfill
\subfloat[Coupled interaction of the two gradient algorithms.]{%
      \tikzset{every picture/.style={line width=0.75pt}} 

\begin{tikzpicture}[x=0.75pt,y=0.75pt,yscale=-1,xscale=1]

\draw   (125,7.5) -- (241.8,7.5) -- (241.8,47) -- (125,47) -- cycle ;
\draw   (124.6,99.7) -- (242.6,99.7) -- (242.6,150.6) -- (124.6,150.6) -- cycle ;
\draw    (169.4,47.4) -- (169.4,96) ;
\draw [shift={(169.4,99)}, rotate = 270] [fill={rgb, 255:red, 0; green, 0; blue, 0 }  ][line width=0.08]  [draw opacity=0] (8.93,-4.29) -- (0,0) -- (8.93,4.29) -- cycle    ;
\draw    (192.2,50) -- (192.2,97.4) ;
\draw [shift={(192.2,47)}, rotate = 90] [fill={rgb, 255:red, 0; green, 0; blue, 0 }  ][line width=0.08]  [draw opacity=0] (8.93,-4.29) -- (0,0) -- (8.93,4.29) -- cycle    ;

\draw (148.4,8.6) node [anchor=north west][inner sep=0.75pt]  [font=\footnotesize] [align=left] {\begin{minipage}[lt]{50.99pt}\setlength\topsep{0pt}
\begin{center}
(\ref{eq:estimateupdate}) or (\ref{eq:advestimate}) \ \\$\learner$ queries $\oracle$ and runs SGD
\end{center}

\end{minipage}};
\draw (126.6,60.8) node [anchor=north west][inner sep=0.75pt]  [font=\scriptsize] [align=left] {\begin{minipage}[lt]{26.53pt}\setlength\topsep{0pt}
\begin{center}
System\\State $\statevar_\timeindex$
\end{center}

\end{minipage}};
\draw (194.2,56.2) node [anchor=north west][inner sep=0.75pt]  [font=\scriptsize] [align=left] {\begin{minipage}[lt]{37.03pt}\setlength\topsep{0pt}
\begin{center}
Action $\action_\timeindex$\\
{\scriptsize {\fontfamily{pcr}\selectfont learn }or {\fontfamily{pcr}\selectfont obfuscate}}
\end{center}

\end{minipage}};


\draw (131.2,108.8) node [anchor=north west][inner sep=0.75pt]  [font=\footnotesize] [align=left] {\begin{minipage}[lt]{76.96pt}\setlength\topsep{0pt}
\begin{center} 
Structured policy gradient to obtain optimal
policy $\policy$ 
\end{center}

\end{minipage}};

\end{tikzpicture}}
      \vspace{-3mm}
      \caption{To achieve covert optimization, we propose controlling the SGD by a policy gradient algorithm that exploits the policy structure. }
      \vspace{-10mm}
\label{fig:contributions}
\end{wrapfigure}
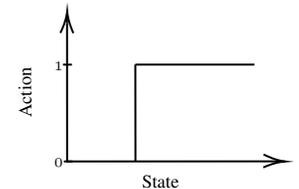
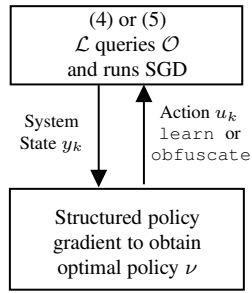
\begin{itemize}
\item This paper proposes a framework for the learner to dynamically query the oracle by solving a Markov decision process (MDP) to exploit the inherent stochasticity, making learning more robust and achieving covertness. Structural results are proven in Theorem~\ref{th:thresholdpolicy} and Theorem~\ref{th:randomizedpolicy}, which show that the optimal policy has a threshold structure (Fig.~\ref{fig:contributions}(a)) which enables the use of efficient policy search methods. This framework can be extended to meta-learning problems like controlling learning for energy efficiency and quality control. 
\item A policy gradient algorithm which finds the optimal stationary policy solving the MDP controls the stochastic gradient descent as described by (\ref{eq:coupled}) and shown in Fig.~\ref{fig:contributions}(b). The policy gradient algorithm is linear time due to the threshold nature of the optimal policy and does not need knowledge of the transition probabilities. The policy gradient algorithm runs on a faster time scale and can adapt to changes in the system.
\item The methods are demonstrated on a novel application, covert federated learning (FL) on a text classification task using large language model embeddings. 
  Our key numerical results are summarized in Table~\ref{tab:mainresult}. It is shown that compared to a greedy policy when the learner is using the optimal policy, the optimal weights of that the eavesdroppers estimates do not generalize well, in both the scenarios discussed in detail later~\footnote{A greedy policy poses learning queries until all the required successful updates are done to the model and then starts posing obfuscating queries.}. 
  \item This paper considers two practicalities: a stochastic oracle and an optimization queue. A stochastic oracle with different noise levels can model a non-i.i.d. client distribution in FL, and an optimization queue can model repeated optimization tasks like machine unlearning and distribution shifts. Theorem~\ref{th:noofupdates} characterizes the number of successful updates required for convergence, and Lemma~\ref{lemma:queue} analyzes the stability of the queue.
\end{itemize}

\begin{table}[b!]
\centering
\begin{tabular}{@{}lcccc@{}}

\toprule
 &
  \multicolumn{2}{c}{\begin{tabular}[c]{@{}c@{}}Scenario 1\\ Eavesdropper has no toxic samples\end{tabular}} &
  \multicolumn{2}{c}{\begin{tabular}[c]{@{}c@{}}Scenario 2\\ Eavesdropper has toxic samples\end{tabular}} \\ \midrule
Type of Policy & Eavesdropper accuracy & Learner accuracy & Eavesdropper accuracy & Learner accuracy \\
Greedy         & 0.83                  & 0.84             & 0.83                  & 0.82             \\
Optimal        & 0.52                  & 0.81             & 0.69                  & 0.81             \\ \bottomrule
\end{tabular}
\caption{The optimal policy to our MDP formulation achieves covertness: The eavesdropper's accuracy is reduced significantly in comparison to a greedy policy (-31\%) even when the eavesdropper has samples to validate the queries (-14\%). The learner's accuracy remains comparable in both the scenarios with a maximum difference of 3\%. }
\label{tab:mainresult}
\end{table}

\subsection{Motivation and Related Works }\label{sec:related}
The main application of covert stochastic optimization is in a distributed optimization where the central learner queries a distributed oracle and receives gradient evaluations using which it optimizes $\function$. Such a distributed oracle is part of federated learning where deep learning models are optimized on mobile devices using distributed datasets~\citep{mcmahan_communication-efficient_2017} and pricing optimization where customers are surveyed~\citep{delahaye_data-driven_2017}. In distributed optimization, the eavesdropper can pretend as a local worker of the distributed oracle and obtain the queries posed (but not the aggregated responses). The eavesdropper can infer the local minima by observing the queries. These estimates can be used for malicious intent or competitive advantage since obtaining reliable, balanced, labeled datasets is expensive. As an example, in hate speech classification in a federated setting~\citep{meskys_regulating_2019,narayan_desi_2022}, a hate speech peddler can pretend as a client and observe the trajectory of the learner to obtain the optimal weights minimizing the loss function. Using these optimal weights for discrimination, the eavesdropper can train a generator network to generate hate speech which will not be detected~\citep{hartvigsen_toxigen_2022,ferrari_transferring_2018}.

\textbf{Learner-private or covert optimization: }
The problem of protecting the optimization process from eavesdropping adversaries is examined in recent literature~\citep{xu_learner-private_2021,tsitsiklis_private_2021,xu_query_2018,xu_optimal_2021}. ~\citet{tsitsiklis_private_2021,xu_optimal_2021,xu_learner-private_2021} to obtain ($L,\delta$)-type privacy bounds on the number of queries required to achieve a given level of learner-privacy and optimize with and without noise. The current state of art~\citep{xu_learner-private_2021} in covert optimization dealt with convex functions in a noisy setting. Although there is extensive work in deriving the theoretical proportion of queries needed for covert optimization, practically implementing covert optimization has not been dealt with. 
In contrast, our work considers a non-convex noisy setting with the problem of posing learning queries given a desired level of privacy. We want to schedule $\numsuccupdates$ model updates in $\numqueries$ queries and obfuscate otherwise, ideally learning when the noise of the oracle is expected to be less. Theoretical bounds are difficult to derive for a non-convex case, but we empirically show our formulation achieves covertness.
The premise and the approach of this work aligns with the hiding optimal policies in reinforcement learning when actions are visible~\citep{liu_deceptive_2021,pan_how_2019,dethise_cracking_2019} and
 dynamically preserving privacy while traversing a graph~\citep{erturk_anonymous_2020}~\footnote{Distinction from differential privacy: Differential privacy~\citet{lee_digestive_2021,dimitrov_data_2022,asi_private_2021} is concerned with the problem of preserving the privacy of the local client's data by mechanisms like adding noise and clipping gradients, whereas learner-private or covert optimization is used when the central learner is trying to prevent an eavesdropper from freeriding on the optimization process~\citep{tsitsiklis_private_2021}.}. 


\textbf{Motivation for stochastic oracle and optimization queue:} An important application of covert optimization is in federated learning, which is deployed in various machine learning tasks to improve the privacy of dataset owners and communication efficiency~\citep{heusel_gans_2017,chen_communication-efficient_2022}. FL involves non-i.i.d. data distribution across clients and time-varying client participation and data contribution which motivate a dynamic stochastic oracle~\citep{karimireddy_scaffold_2020,jhunjhunwala_fedvarp_2022,doan_convergence_2020,chen_finite-sample_2022}. Similar to recent work in a setup with Markovian noise in FL~\citep{sun_markov_2018,rodio_federated_2023}, we model the noise levels of the oracle following a discrete Markov chain, and exploit the Markovian nature of the oracle's stochasticity to query dynamically. An optimization queue for training tasks is motivated by active learning where the learner waits for training data to be annotated and performs optimization once there is enough annotated data~\citep{bachman_learning_2017,wu_deltagrad_2020}. Optimization also needs to be done due to purging client's data, training in new contexts, distributional shifts, or by the design of the learning algorithm~\citep{tripp_sample-efficient_2020,cai_theory_2021,mallick_matchmaker_2022,ginart_making_2019,sekhari_remember_2021}. 



\subsection{Organization}


In Section~\ref{sec:model}, the problem of minimizing a function while hiding the $\arg\min$ is modeled as a controlled SGD. In Section~\ref{sec:mdp}, a finite horizon MDP is first formulated where the learner has to perform $M$ successful gradient updates of the SGD in $N$ total queries to a stochastic oracle. The formulation is extended to an infinite horizon constrained MDP (CMDP) for cases when the learner performs optimization multiple times, and a policy gradient algorithm is proposed to search for the optimal policy. Numerical results are demonstrated on a classification task in Section~\ref{sec:example}~\footnote{Our results are for a stochastic oracle that returns noisy responses, and the learner rejects specific responses based on thresholds on the noise levels. A discussion of the same, along with the dataset description, additional experimental results and proofs, are presented in the Appendix.}.

\vspace{-2mm}

\textbf{Limitations: } The assumption of unbiased responses and the independence of the eavesdropper and oracle might not hold, especially if the eavesdropper can deploy multiple devices.
Due to the lack of data on device participation in FL, it is difficult to verify the assumption of a Markovian oracle. The assumption on equal priors for both the obfuscating and learning SGD has not been theoretically verified.
The assumption that the learner knows about the eavesdropper's dataset distribution might only hold if it is a public dataset or if there are no obvious costly classes.


\section{Controlled Stochastic Gradient Descent for Covert Optimization}\label{sec:model}
This section discusses the oracle, the learner, and the eavesdropper and states the assumptions which help in modeling the problem as a Markov decision process in the next section. The following is assumed about the oracle $\oracle$:

(\textbf{A1}) The oracle is a function $\function: \domain \to \R$, where $\domain \subseteq \R^\dimension$ is a compact set~\footnote{$f$ can be an empirical loss function for a training dataset $\dataset$, $\function(x;\dataset) = \sum_{\datapoint_i \in \dataset}\lossfunction(x;\datapoint_i)$, $\dataset = \left\{ \datapoint_i \right\}$ where $\lossfunction(\cdot;\cdot)$ is a loss function. In FL, the oracle is a collection of clients acting as a distributed dataset. We drop the $(\query_\timeindex)$ from the response $\response_\timeindex(\query_\timeindex)$ for the rest of the paper.}. $\function$ belongs to a function class $\f$ whose functions are bounded from below (by $\function^*$), are continously differentiable and have derivatives which are $\lipschitz$-Lipschitz continous , i.e.
$|\nabla \function(z) - \nabla \function(\functionvar)| \leq \lipschitz|z - \functionvar|, \forall \functionvar,z \in \domain$, 
where $\nabla f$ indicates the gradient of $\function$.

At time $\timeindex$, for a query $\query_\timeindex \in \domain$, the oracle returns a noisy evaluation $\response_\timeindex$ of $\nabla \function$ at $\query_\timeindex$ with added noise $\noise_\timeindex$,
\begin{align}\label{eq:response}
 \response_\timeindex(\query_\timeindex)= \nabla \function(\query_\timeindex) + \noise_\timeindex.
\end{align}

(\textbf{A2}) The noise terms are assumed to be unbiased and have a bounded variance
such that $\expectation\left[\noise_\timeindex\right] = 0$, $\expectation\left[\left<\nabla \function(\query_\timeindex),\noise_\timeindex \right>\right] = 0$ and, $\expectation\left[||\response_\timeindex^2||\right] \leq \noisetolerance_\timeindex^2$.  $\noisetolerance_\timeindex$ is a constant that the oracle returns along the response, $\response_\timeindex$~\footnote{There is no assumption of independence on the noise terms~\citep{ghadimi_stochastic_2013}. As shown in the numerical experiments, a proxy for $\noisetolerance_\timeindex$ (ex. size of the training data in FL) can be returned since the actual $\noisetolerance_\timeindex$ might not be obtained in practice.
}. 

Assumptions (A1) and (A2) are standard regularity assumptions when analyzing query complexity results of stochastic gradient algorithms~\citep{ghadimi_stochastic_2013}. A point $\functionvar^* \in \domain$ is called a critical point if $\nabla \function (\functionvar^*) = 0$.

The objective of the learner $\learner$ is to query the oracle and obtain a $\epsilon$-close estimate $\learnerestimate$ such that, 
\begin{align}\label{eq:epsilonclose}
   \expectation\left(|| \nabla \function(\learnerestimate) ||^2 \right) \leq \epsilon ,
\end{align}
given an initial estimate, $\learnerestimate_0 \in \domain$. The learner has knowledge of $\lipschitz$ from (A1). At time $\timeindex$ for query $\query_\timeindex$ the learner receives $(\response_\timeindex,\noisetolerance_\timeindex^2)$ of (\ref{eq:response}) and (A2) from the oracle.
The learner iteratively queries the oracle with a sequence of queries $\query_1, \query_2, \dots$ and correspondingly updates its estimate $\learnerestimate_1, \learnerestimate_2, \dots$ for estimating $\learnerestimate$ such that it achieves its objective (\ref{eq:epsilonclose}). 

In classical SGD, the learner iteratively updates its estimate based on the gradient evaluations at the previous estimate. Now, since the queries are visible and the learner has to obfuscate the eavesdropper, the learner can either query using its true previous estimate or obfuscate the eavesdropper as described later. The learner updates its estimates $\learnerestimate_1, \learnerestimate_2, \dots$ based on whether the posed query $\query_\timeindex$  for learning or not and the received noise statistic $\noisetolerance_\timeindex$. A learning query is denoted by action $\action_\timeindex = 1$ and an obfuscating query by action $\action_\timeindex = 0$. The learner chooses a noise constant $\noisetolerance$~\footnote{The noise constant $\noisetolerance$ helps characterize the number of queries required and decide if a received response will be used for learning or not, this is in principle similar to the controlling communication done in ~\citet{chen_communication-efficient_2022,sun_lazily_2022}. The probability of $\noisetolerance_\timeindex \leq \noisetolerance$ depends on the oracle state (defined in the next section). Using such a construction our framework enables characterizing a oracle which has varying noise levels.} and performs a controlled SGD with step size $\stepsize_\timeindex \ (= \nicefrac{1}{m}$ for $m^{\text{th}}$ successful update) such that it updates its estimate only if $\noisetolerance_\timeindex^2 \leq \noisetolerance^2$ and if a learning query was posed, i.e., $\action_\timeindex = 1$ ($\indicator$ denotes the indicator function), 
\begin{align}\label{eq:estimateupdate}
    \learnerestimate_{\timeindex +1}= \learnerestimate_{\timeindex} - \stepsize_k \response_\timeindex \indicator \left( \noisetolerance_\timeindex^2 \leq \noisetolerance^2 \right)\indicator \left( \action_\timeindex = 1\right).
\end{align} 

For formulating the MDP in the next section, we need the following definition and theorem, which characterizes the finite nature of the optimization objective of (\ref{eq:epsilonclose}). The proof of the theorem follows by applying the gradient lemma and standard convergence results to the update step~\citep{bottou_stochastic_2004,ghadimi_stochastic_2013} and is given in the Appendix. 
\begin{definition}\label{def:succupdate}
\textbf{Successful Update}: At time $\timeindex$ an iteration of (\ref{eq:estimateupdate}) is a successful update if  $\indicator \left( \noisetolerance_\timeindex^2 \leq \noisetolerance^2 \right)\indicator \left( \action_\timeindex = 1 \right) = 1$.
\end{definition}  

\begin{theorem}~\label{th:noofupdates}
(\textbf{Required number of successful updates})
Learner $\learner$ querying an oracle $\oracle$ which satisfies assumptions (A1-2) using a controlled stochastic gradient descent~\footnote{The SGD analysis can be extended to algorithms with faster convergence like momentum based Adam~\citep{kingma_adam_2017} as is illustrated in the numerical studies. Besides simplicity, SGD is shown to have better generalizability ~\cite{zhou_towards_2020,wilson_marginal_2017}.} with updates of the form (\ref{eq:estimateupdate}), needs to perform $M$ successful updates (Def. \ref{def:succupdate}) to get $\epsilon$-close to a critical point (\ref{eq:epsilonclose}), where
\begin{align*}
M = O\left( \exp\left(\frac{L \noisetolerance^2}{\epsilon} \right)\right).
\end{align*}
\end{theorem}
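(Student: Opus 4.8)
The plan is to run the standard smoothness-based convergence analysis of stochastic gradient descent, but to track carefully how the harmonic step size $\stepsize_m = 1/m$ (indexed by successful updates rather than by time $\timeindex$) controls the accumulated progress; this harmonic decay is precisely what turns the usual polynomial complexity into the claimed exponential bound. First I would re-index the iteration so that $m = 1, 2, \ldots$ counts only the successful updates of (\ref{eq:estimateupdate}) in the sense of Definition~\ref{def:succupdate}, writing $x_m$ for the learner's estimate after the $m$-th successful update (so $x_0 = \learnerestimate_0$). Because a successful update requires $\action_\timeindex = 1$, the query at that step is the learner's current true estimate, so the response has conditional mean $\expectation[\response_\timeindex \mid x_{m-1}] = \nabla \function(x_{m-1})$ by the unbiasedness in (A2).

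Next I would invoke the descent (gradient) lemma, valid since $\nabla \function$ is $\lipschitz$-Lipschitz by (A1). With $x_m = x_{m-1} - \stepsize_m \response_\timeindex$ this gives
\begin{align*}
\function(x_m) \le \function(x_{m-1}) - \stepsize_m \langle \nabla \function(x_{m-1}), \response_\timeindex\rangle + \frac{\lipschitz \stepsize_m^2}{2}\|\response_\timeindex\|^2 .
\end{align*}
Taking conditional expectation, using $\expectation[\response_\timeindex \mid x_{m-1}] = \nabla \function(x_{m-1})$ and the second-moment bound $\expectation[\|\response_\timeindex\|^2] \le \noisetolerance^2$ (which holds because a successful update enforces $\noisetolerance_\timeindex^2 \le \noisetolerance^2$), I obtain
\begin{align*}
\expectation[\function(x_m)] \le \expectation[\function(x_{m-1})] - \stepsize_m\, \expectation[\|\nabla \function(x_{m-1})\|^2] + \frac{\lipschitz \noisetolerance^2}{2}\stepsize_m^2 .
\end{align*}
Substituting $\stepsize_m = 1/m$, rearranging, summing over $m = 1, \ldots, \numsuccupdates$ so the function values telescope, bounding $\function \ge \function^*$, and using $\sum_{m\ge 1} 1/m^2 = \pi^2/6$, yields
\begin{align*}
\sum_{m=1}^{\numsuccupdates} \frac{1}{m}\, \expectation[\|\nabla \function(x_{m-1})\|^2] \le \function(x_0) - \function^* + \frac{\lipschitz \noisetolerance^2 \pi^2}{12} .
\end{align*}

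Finally, I would lower-bound the left-hand side by $\big(\min_{m}\expectation[\|\nabla \function(x_{m-1})\|^2]\big)\sum_{m=1}^{\numsuccupdates} 1/m$ and use the harmonic estimate $\sum_{m=1}^{\numsuccupdates} 1/m \ge \ln \numsuccupdates$, so that the best-iterate (equivalently, a randomly drawn iterate) expected squared gradient is at most $(\function(x_0) - \function^* + \lipschitz\noisetolerance^2\pi^2/12)/\ln \numsuccupdates$. Requiring this to be $\le \epsilon$ in order to meet objective (\ref{eq:epsilonclose}) and solving for $\numsuccupdates$ gives $\ln \numsuccupdates \ge (\function(x_0) - \function^* + \lipschitz\noisetolerance^2\pi^2/12)/\epsilon$, i.e. $\numsuccupdates = O\!\left(\exp(\lipschitz\noisetolerance^2/\epsilon)\right)$. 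The main obstacle, and the conceptual crux, is the logarithmic growth of the harmonic denominator: because the $1/m$ step size makes cumulative progress grow only like $\ln \numsuccupdates$, driving the expected squared gradient below $\epsilon$ forces $\numsuccupdates$ to be exponential in $1/\epsilon$. A secondary point requiring care is the conditional-expectation (tower) argument that makes the cross term vanish exactly through the orthogonality assumption $\expectation[\langle \nabla \function, \noise\rangle] = 0$ in (A2), together with justifying that the guarantee on the minimum (or randomized) iterate is exactly what (\ref{eq:epsilonclose}) demands of the returned estimate $\learnerestimate$.
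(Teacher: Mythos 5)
Your proposal is correct and follows essentially the same route as the paper's proof: the descent lemma applied only at successful-update indices, assumption (A2) to kill the cross term and bound the second moment by $\noisetolerance^2$, telescoping the function values against the lower bound $\function^*$, and then exploiting that with step size $\nicefrac{1}{m}$ the weight sum grows only like $\log \numsuccupdates$ while $\sum \nicefrac{1}{m^2}$ stays bounded, which forces $\numsuccupdates = O\left(\exp\left(\nicefrac{\lipschitz \noisetolerance^2}{\epsilon}\right)\right)$. The only cosmetic difference is that you close with the best-iterate bound while the paper uses the step-size-weighted randomly drawn iterate; as you note, the two are interchangeable here.
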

If the $\numsuccupdates$ successful updates happen at time indices $\timeindex_1, \dots, \timeindex_\numsuccupdates$, the learner's estimate of the critical point, $\learnerestimate$ can be picked as $\learnerestimate_{\timeindex_\indexone}$ with probability $\frac{\stepsize_{\timeindex_\indexone}}{\sum_1^\numsuccupdates \stepsize_{\timeindex_\indextwo}}$. 

Theorem~\ref{th:noofupdates} helps characterize the order of the number of successful gradient updates that need to be done in the total communication slots available to achieve the learning objective of (\ref{eq:epsilonclose}). If the optimization is not one-off, the learner maintains a queue of optimization tasks where each optimization task requires up to order $\numsuccupdates$ successful updates. 

The obfuscation strategy used by the learner builds upon the strategy suggested in existing literature~\citep{xu_learner-private_2021,xu_query_2018}. The learner queries either using the correct estimates from (\ref{eq:estimateupdate}) or queries elsewhere in the domain to obfuscate the eavesdropper. In order to compute the obfuscating query, we propose that the learner runs a parallel SGD, which also ensures that the eavesdropper does not gain information from the shape of two trajectories. The obfuscating queries are generated using a suitably constructed function, $\advupdatefn$, and running a parallel SGD with an estimate, $\advestimate_\timeindex$, 
\begin{align}\label{eq:advestimate}
\advestimate_{\timeindex +1} = \advestimate_{\timeindex} - \stepsize_\timeindex \advupdatefn ( \response_\timeindex, \noisetolerance_\timeindex, \action_\timeindex ).
\end{align}

At time $\timeindex$ an eavesdropper $\eaves$ has access to the query sequence, $(\query_1, \dots, \query_\timeindex)$ which the eavesdropper uses to obtain an estimate, $\advestimate$ for the $\argmin \function$. We make the following assumption on the eavesdropper:

(\textbf{E1}) The eavesdropper is assumed to be passive, omnipresent, independent of the oracle, and unaware of the chosen $\epsilon$~\footnote{This is generally not true, especially if the eavesdropper is part of the oracle (for example, in FL), but we take this approximation assuming since the number of clients is much greater than the single eavesdropper and hence the oracle is still approximately Markovian.}.


(\textbf{E2}) Given a sequence of $\numqueries$ queries $(\query_1, \query_2, \dots, \query_\numqueries)$ which the eavesdropper observes, we assume that the eavesdropper can partition query sequence into two disjoint SGD trajectory sequences $\learnerinterval$ and $\advinterval$~\footnote{The parameter space is high dimensional, and it is assumed that SGD trajectories do not intersect. The final weights used in production can be transmitted in a secure fashion (using a much costlier, less efficient communication~\citep{xu_learner-private_2023,kairouz_advances_2021}).}.  

(\textbf{E3}) In the absence of any additional information, the eavesdropper uses a proportional sampling estimator~\citep{xu_query_2018}.

Given an equal prior over disjoint intervals, an eavesdropper using a proportional sampling estimator calculates the posterior probability of the minima lying in an interval proportional to the number of queries observed in the interval. Since this work considers two disjoint SGD trajectories, the eavesdropper's posterior probability of the minima belonging to one of the SGD trajectories given equal priors is proportional to the number of queries in the trajectories,
\begin{definition}
    \textbf{Proportional sampling estimator}: If the eavesdropper (with assumptions E1-2) 
 observes queries belonging to two SGD trajectory sequences $\learnerinterval$ and $\advinterval$ and has equal prior over both of them, then the eavesdropper's posterior belief of the learner's true estimate $\learnerestimate$ belonging to $\learnerinterval$ is given by, 
    $P_\learnerinterval \overset{\Delta}{=} \probability(\learnerestimate \in \learnerinterval|(\query_1, \query_2, \dots, \query_\numqueries) ) = \frac{|\learnerinterval|}{|\learnerinterval| + |\advinterval|}    
    $.
\end{definition}
Let $\trajectory^*$ be defined as $\trajectory^{*} = \underset{\trajectory \in \{ \learnerinterval, \advinterval \}}{\arg\max} \ P_\trajectory$ and $\sequence[-1]$ retrieve the last item of the sequence $\sequence$. After $\numqueries$ observed queries, the eavesdropper's maximum a posteriori estimate, $\advestimate$ of the learner's true estimate, $\learnerestimate$, is given by, 
\begin{align}\label{eq:propsampling}
\begin{split}
\advestimate &= (\query_\timeindex | \query_\timeindex \in \trajectory^{*}, \timeindex = 1, \dots, \numqueries )[-1]. 
\end{split}
\end{align} 

An eavesdropper using a proportional sampling estimator with an estimate of the form (\ref{eq:propsampling}) can be obfuscated by ensuring a) that the eavesdropper has equal priors over the two trajectory and b) that the majority of the queries posed are obfuscating~\footnote{This can be extended to obfuscating with multiple intervals by using auxiliary trajectories and obfuscating by choosing uniformly between those.}. Rather than posing the learning queries randomly, we formulate an MDP in the next section, which exploits the stochasticity of the oracle for optimally scheduling the queries.

\section{Controlling the Stochastic Gradient Descent using a Markov decision process}\label{sec:mdp}
This section formulates a Markov decision process that the learner $\learner$ solves to obtain a policy to dynamically choose between posing a learning or an obfuscating query. The MDP is formulated for a finite horizon and infinite horizon case, and we prove structural results characterizing the monotone threshold structure of the optimal policy.

\subsection{Finite Horizon Markov Decision Process}
In the finite horizon case where the learner wants to optimize once and needs $\numsuccupdates$ successful updates (obtained from Theorem \ref{th:noofupdates} or chosen suitably) to achieve (\ref{eq:epsilonclose}) with $\numqueries (> \numsuccupdates)$ queries. Such a formulation helps model covert optimization of the current literature for a one-off FL task or a series of training tasks carried out one after the other. 

The state space $\statespace$ is an augmented state space of the oracle state space $\statespace_\oraclesymbol$ and the learner state space $\statespace_\learnersymbol$. The oracle is modeled to have $\oraclestate$ oracle states, $\statespace_\oraclesymbol = \{ 1, 2, \dots \oraclestate \}$. The learner state space has $\numsuccupdates$ states, $\statespace_\learnersymbol = \{ 1,2, \dots, \numsuccupdates \}$ which denote the number of successful updates left to be evaluated by the learner. The augmented state space is $\statespace = \statespace_\oraclesymbol \times \statespace_\learnersymbol$. The state space variables with $\dpindex$ queries remaining (out of $\numqueries$) is denoted by $\statevar_\dpindex = (\statevar_\dpindex^\oraclesymbol,\statevar_\dpindex^\learnersymbol)$. As described in the obfuscation strategy, the learner can query either to learn using (\ref{eq:estimateupdate}) or to obfuscate using (\ref{eq:advestimate}), the action space is $\actionspace = \{ 0 = \texttt{obfuscate}, 1 = \texttt{learn} \}$. $\action_\dpindex$ denotes the action when $\dpindex$ queries are remaining .

$\successfn: \statespace_\oraclesymbol \times \actionspace \to [0,1]$ denotes the probability of a successful update of (\ref{eq:estimateupdate}) and is dependent on $\statevar^\oraclesymbol_\dpindex$ and action $\action_\dpindex$,  $\successfn(\statevar^\oraclesymbol_{\dpindex},\action_\dpindex) = \probability \left( \sigma_\dpindex^2 \leq \sigma^2 \mid \statevar^\oraclesymbol_\dpindex \right)\indicator \left( \action_\dpindex = 1 \right) $.  The learner state reduces by one if there is a successful update of (\ref{eq:estimateupdate}), hence the transition probability between two states of the state space $\statespace$ is given by, 
\begin{align*}
    \probability(\statevar_{\dpindex-1}|\statevar_\dpindex, \action_\dpindex) &= \probability(\statevar^O_{\dpindex-1}|\statevar^\oraclesymbol_{\dpindex})\left( \successfn(\statevar^\oraclesymbol_{\dpindex},\action_\dpindex)\indicator(\statevar^\learnersymbol_{\dpindex-1} = \statevar^\learnersymbol_{\dpindex} - 1 ) \right. \left. + (1 - \successfn(\statevar^\oraclesymbol_{\dpindex},\action_\dpindex))\indicator(\statevar^\learnersymbol_{\dpindex-1} = \statevar^\learnersymbol_{\dpindex}  )
    \right).
\end{align*}
With $\dpindex$ queries left, for action, $\action_\dpindex$, the learner incurs a privacy cost, $\privacycost(\action_\dpindex,\statevar^\oraclesymbol_\dpindex)$ which accounts for the increase in the useful information known to the eavesdropper. At the end of $\numqueries$ queries, the learner incurs a terminal learning cost $\learningcost(\statevar^\learnersymbol_0)$ which penalizes the remaining number of successful updates $\statevar^\learnersymbol_0$. The following assumptions are made about the MDP:

(\textbf{M1}) The rows of the probability transition matrix between two states of the oracle are assumed to be first-order stochastic dominance orderable (Eq. (1) of \citet{ngo_optimality_2009}), i.e.,
$\sum_{k \geq l} \probability(\statevar^\oraclesymbol_{\dpindex+1} = k|\statevar^\oraclesymbol_{\dpindex} = j) \leq \sum \probability(\statevar^\oraclesymbol_{\dpindex+1} = k|\statevar^\oraclesymbol_{\dpindex} = i) \  \forall \ i > j, \ l = 1, \dots, \oraclestate $. This is a restatement of the assumption that an oracle in a better state is more likely to stay in a better state, which is found to be empirically true for most computer networks.

(\textbf{M2})  $\privacycost(\cdot,\cdot)$ is chosen to decrease in $\action$ and also decrease in oracle cost $\statevar^\oraclesymbol$ to incentivize learning when the oracle is in a good state. The learner does not incur a privacy cost when it does not learn, i.e., $\privacycost(0,\statevar^\oraclesymbol) = 0$. 

(\textbf{M3}) $\learningcost(\cdot)$ is increasing in $\statevar^\learnersymbol_0$, integer convex~\footnote{Integer convexity helps prove the structural results and ensures that the learning is prioritized more when the queue is bigger.}, $\learningcost(\statevar^\learnersymbol_0 + 2) - \learningcost(\statevar^\learnersymbol_0 + 1) > \learningcost(\statevar^\learnersymbol_0 + 1) - \learningcost(\statevar^\learnersymbol_0 )$ and $\learningcost(0) = 0$. 

(\textbf{M4}) With $\dpindex$ queries remaining, the learner has information on $\statevar_\dpindex$, and the eavesdropper does not have any information~\footnote{The asymmetry in information can be explained by assumption E1 since the eavesdropper is assumed to be a part of a much larger oracle.}.

The objective for the finite horizon MDP can be formulated as minimizing the state action cost function $\stateactionfn_\dpindex(\statevar,\action)$, 
\begin{align}\label{eq:valuefn}
\begin{split}
    \valuefn_{\dpindex}(\statevar) = \min_{\action \in \actionspace} \stateactionfn_\dpindex(\statevar,\action), \\
\end{split}
\end{align}
where,
$
\stateactionfn_\dpindex(\statevar,\action)  = \privacycost(\action_\dpindex, \statevar^\oraclesymbol) + \sum_{\statevar^\prime \in \statespace} P(\statevar^\prime|\statevar,\action_\dpindex) \valuefn_{\dpindex - 1}(\statevar^\prime)$ and 
$\valuefn_{\dpindex}(\statevar)$ is the value function with $\dpindex$ queries remaining and $\valuefn_0(\statevar) = \learningcost(\statevar_0^\learnersymbol)$. The optimal decision rule is given by, $\decisionrule_{\dpindex}(\statevar) = {\arg\min}_{\action \in \actionspace} \stateactionfn_\dpindex(\statevar,\action)$. The optimal policy $\policy^*$ is the sequence of optimal decision rules, $\policy^*(\statevar) = \left(\decisionrule_{\numqueries}(\statevar), \decisionrule_{\numqueries - 1}(\statevar), \dots, \decisionrule_{1}(\statevar)\right)$. 

\subsection{Infinite Horizon Constrained Markov Decision Process}
This subsection formulates the infinite horizon constrained MDP (CMDP), highlights the key differences from the finite horizon case, introduces the optimization queue, and proves its stability. The CMDP formulation minimizes the average privacy cost while satisfying a constraint on the average learning cost. 
An optimization queue helps model the learner performing optimization repeatedly, which is needed for purging specific data, distributional shifts, and active learning. 

\textbf{Optimization Queue: }
The learner maintains a queue with the number of successful updates it needs to make of length $\statevar_\dpindex^\learnersymbol$ at time $\dpindex$. In contrast to the finite horizon case, the learner receives new requests and extends its queue. At time $\dpindex$, $\statevar_\dpindex^\queuesymbol$ new successful updates required are added to the queue. $\statevar_\dpindex^\queuesymbol$ is an i.i.d. random variable with $\probability(\queuesymbol_\dpindex = \numsuccupdates) = \queueavg$, $\probability(\queuesymbol_\dpindex = 0) = 1 - \queueavg$ and $\expectation(\queuesymbol_\dpindex) = \queueavg\numsuccupdates$~\footnote{The construction of $\statevar^\queuesymbol$ and the i.i.d. condition is for convenience and we only require that $\statevar^\queuesymbol$ is independent of the learner and the oracle state.}.
In order to ensure the queue is stable, we pose conditions on the success function $\successfn$, the learning cost $\learningcost$, and the learning constraint $\learningconstraint$ in Lemma \ref{lemma:queue}. 

The state space for the new arrivals to the queue is $\statespace_\queuesymbol = \left\{ 0, \numsuccupdates \right\}$. Also, the learner state is denumerable, $\statespace_\learnersymbol = \{ 0, 1, \dots \}$. The oracle state space, $\statespace_\oraclesymbol$ is the same as before. The state space is now, $\statespace = \statespace_\oraclesymbol \times \statespace_\learnersymbol \times \statespace_\queuesymbol$. The state variables at time $\dpindex$ ($\dpindex$ denotes the time index for deciding this section) are given by $\statevar_\dpindex = (\statevar_\dpindex^\oraclesymbol,\statevar_\dpindex^\learnersymbol,\statevar_\dpindex^\queuesymbol)$. 

The transition probability now has to incorporate the queue arrival, with the same $\successfn$ as before and can be written as,
\begin{align}~\label{eq:probabilitytransition}
\begin{split}
    \probability(\statevar_{\dpindex+1}|\statevar_\dpindex, \action_\dpindex) &= \probability(\statevar^\oraclesymbol_{\dpindex+1}|\statevar^\oraclesymbol_{\dpindex})\probability(\statevar^\queuesymbol_{\dpindex+1} )\left(\successfn(\statevar^\oraclesymbol_{\dpindex},\action_n)\indicator(\statevar^\learnersymbol_{\dpindex+1} = \statevar^\learnersymbol_{\dpindex} + \statevar^\queuesymbol_{\dpindex} - 1 ) \right. \\&\left. + (1 - \successfn(\statevar^\oraclesymbol_{\dpindex},\action_n))\indicator(\statevar^\learnersymbol_{\dpindex+1} = \statevar^\learnersymbol_{\dpindex} + \statevar^\queuesymbol_{\dpindex} )
    \right).
\end{split}
\end{align}

The learning cost in the infinite horizon case is redefined as
$\learningcost(\action_\dpindex,\statevar^\oraclesymbol_\dpindex)$ and is decreasing in $\action_\dpindex$ and increasing in $\statevar^\oraclesymbol_\dpindex$ which contrasts with $\privacycost$. The learning cost does not depend on $\statevar^\learnersymbol_\dpindex$ except when $\statevar^\learnersymbol_\dpindex = 0$, $\learningcost(\action_\dpindex, \statevar^\oraclesymbol_\dpindex) = 0 \ \forall \action_\dpindex \in \actionspace, \statevar^\oraclesymbol_\dpindex \in \statespace_\oraclesymbol$.
The privacy cost $\privacycost(\action_\dpindex,\statevar^\oraclesymbol_\dpindex)$, assumptions (M1-2), and the action space $\actionspace$ are the same as the finite horizon case. 

In the infinite horizon case, a stationary policy is a map from the state space to the action space, $\policy: \statespace \to \actionspace$. Hence the policy generates actions, $\policy = (\action_1, \action_2, \dots)$. Let $\policyspace$ denote the space of all stationary policies. The average privacy cost and the average learning cost, respectively, are, 
\begin{align*}
\begin{split}
    \avgprivacycost_{\statevar_0}(\policy) =  \underset{\numqueries \to \infty}{\limsup} \frac{1}{\numqueries} \ \expectation_\policy \left[ \sum_{\dpindex = 1}^{\numqueries} \privacycost(\action_\dpindex,\statevar_\dpindex^\oraclesymbol) \mid \statevar_0 \right], \quad
    \avglearningcost_{\statevar_0}(\policy) = \underset{\numqueries \to \infty}{\limsup} \frac{1}{\numqueries} \ \expectation_\policy \left[ \sum_{\dpindex = 1}^{\numqueries} \learningcost(\action_\dpindex,\statevar_\dpindex^\oraclesymbol) \mid \statevar_0 \right].
\end{split}
               \end{align*}

The constrained MDP can then be formulated as, 
\begin{align}\label{eq:cmdp}
\begin{split}
    \inf_{\policy \in \policyspace} \avgprivacycost_{\statevar_0}(\policy) \ \ \text{s.t.} \ \ \avglearningcost_{\statevar_0}(\policy) \leq \learningconstraint \ \forall \statevar_0 \in \statespace,
\end{split}
\end{align}
where $\learningconstraint$ is the constraint on the average learning cost and accounts for delays in learning. Since the optimization queue can potentially be infinite, to ensure that new optimization tasks at the end get evaluated, we state the following lemma,

\begin{lemma}\label{lemma:queue}
\textbf{(Queue stability)}
Let the smallest success probability be $\successfn_{\min} = \min_{\statevar^\oraclesymbol \in \statespace_\oraclesymbol} \successfn(\action = 1,\statevar^\oraclesymbol)$. If, 
\begin{align*}
  \frac{ \queueavg \numsuccupdates}{\successfn_{\min}}  < 1 - \frac{\learningconstraint}{\learningcost(0,\oraclestate)},
\end{align*}
then every policy satisfying the constraint in (\ref{eq:cmdp}) induces a stable queue and a recurrent Markov chain. 
\end{lemma}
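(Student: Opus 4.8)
The plan is to prove positive recurrence of the learner queue $\statevar^\learnersymbol_\dpindex$ by a Foster--Lyapunov (mean-drift) argument, and then lift this to recurrence of the joint chain $(\statevar^\oraclesymbol_\dpindex,\statevar^\learnersymbol_\dpindex,\statevar^\queuesymbol_\dpindex)$. Fix any stationary policy $\policy\in\policyspace$ satisfying the constraint $\avglearningcost_{\statevar_0}(\policy)\le\learningconstraint$. Under $\policy$ the triple $(\statevar^\oraclesymbol_\dpindex,\statevar^\learnersymbol_\dpindex,\statevar^\queuesymbol_\dpindex)$ is a time-homogeneous Markov chain: the oracle chain of (M1) is finite and irreducible, the arrivals $\statevar^\queuesymbol$ are i.i.d.\ and independent of everything else, and $\statevar^\learnersymbol$ evolves by (\ref{eq:probabilitytransition}). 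I would take the Lyapunov function $V(\statevar)=\statevar^\learnersymbol$ and read off the one-step conditional drift from (\ref{eq:probabilitytransition}), where the queue grows by the arrivals and shrinks by one exactly on a successful update:
\begin{align*}
\expectation\!\left[\statevar^\learnersymbol_{\dpindex+1}-\statevar^\learnersymbol_\dpindex \mid \statevar_\dpindex\right] = \queueavg\numsuccupdates - \successfn(\statevar^\oraclesymbol_\dpindex,\action_\dpindex),
\end{align*}
valid whenever $\statevar^\learnersymbol_\dpindex\ge 1$. The term $\queueavg\numsuccupdates=\expectation[\statevar^\queuesymbol]$ is the mean arrival rate. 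The difficulty is that the departure term $\successfn(\statevar^\oraclesymbol_\dpindex,\action_\dpindex)$ depends on the current oracle state and action, so the pointwise drift is not negative (it equals $+\queueavg\numsuccupdates$ on every obfuscating step); stability must instead come from a bound on the \emph{long-run average} departure rate.

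The heart of the proof is to convert the learning-cost constraint into a lower bound on the long-run successful-update rate $\bar{s}$. Since $\successfn(\statevar^\oraclesymbol,1)\ge\successfn_{\min}$ for every oracle state and $\successfn(\statevar^\oraclesymbol,0)=0$, the departure rate is at least $\successfn_{\min}$ times the long-run fraction of learning queries. I would then lower-bound that fraction using the stated monotonicity of the learning cost: since $\learningcost(\action,\statevar^\oraclesymbol)\ge 0$ is decreasing in $\action$ and increasing in $\statevar^\oraclesymbol$, it is maximized at the obfuscating, worst-oracle corner, $\learningcost(\action,\statevar^\oraclesymbol)\le\learningcost(0,\oraclestate)$. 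Normalizing cost by its maximum $\learningcost(0,\oraclestate)$, the budget $\avglearningcost\le\learningconstraint$ caps the time-averaged normalized cost at $\learningconstraint/\learningcost(0,\oraclestate)$; because only the costly obfuscating steps charge against this budget, a policy meeting it must learn on at least a fraction $1-\learningconstraint/\learningcost(0,\oraclestate)$ of its queries. Combining the two estimates,
\begin{align*}
\bar{s} \;\ge\; \successfn_{\min}\Big(1-\frac{\learningconstraint}{\learningcost(0,\oraclestate)}\Big) \;>\; \queueavg\numsuccupdates,
\end{align*}
where the strict inequality is precisely the hypothesis $\queueavg\numsuccupdates/\successfn_{\min}<1-\learningconstraint/\learningcost(0,\oraclestate)$. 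Hence the time-averaged drift is strictly negative.

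To turn this negative average drift into positive recurrence I would invoke a mean-drift criterion for Markov-modulated queues. Because the modulating oracle chain is finite and mixes and the arrivals are bounded, the queue restricted to $\{\statevar^\learnersymbol\ge b\}$ for a threshold $b$ above which the interior policy is independent of queue length (by the threshold structure of Theorems~\ref{th:thresholdpolicy}--\ref{th:randomizedpolicy}) can be compared with a random walk whose increments have mean $\queueavg\numsuccupdates-\bar{s}<0$. A test function $V(\statevar)=\statevar^\learnersymbol$, or a multi-step (block) version that lets the oracle chain average out, then yields drift $\le-\varepsilon$ outside a finite set, giving positive recurrence of $\statevar^\learnersymbol_\dpindex$, i.e.\ a stable queue. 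Finally, since the oracle state space is finite and irreducible and the arrivals are i.i.d., positive recurrence of the queue coordinate lifts to positive recurrence, hence recurrence, of the joint chain $(\statevar^\oraclesymbol_\dpindex,\statevar^\learnersymbol_\dpindex,\statevar^\queuesymbol_\dpindex)$.

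I expect the second step to be the main obstacle: rigorously passing from the pointwise (possibly positive) drift to a strictly negative average drift. This needs (i) justifying that in the interior the policy is a function of the oracle state alone, so the departure rate equilibrates to its oracle-stationary value, and (ii) making the cost-to-frequency accounting exact, i.e.\ carefully bookkeeping the empty-queue steps (zero cost, zero departures) and the gap between the cheapest and the most expensive obfuscating steps, so that $\successfn_{\min}\big(1-\learningconstraint/\learningcost(0,\oraclestate)\big)$ genuinely lower-bounds $\bar{s}$. Comparing $\statevar^\learnersymbol$ over a block long enough for the oracle chain to mix is the cleanest way to make this precise.
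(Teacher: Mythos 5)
Your core accounting is exactly the paper's: both arguments convert the budget $\avglearningcost_{\statevar_0}(\policy)\le\learningconstraint$ into the statement that the long-run fraction of obfuscating queries posed while the queue is nonempty is at most $\learningconstraint/\learningcost(0,\oraclestate)$, multiply the complementary learning fraction by $\successfn_{\min}$ to lower-bound the long-run service rate, and conclude from $\successfn_{\min}\bigl(1-\learningconstraint/\learningcost(0,\oraclestate)\bigr) \ge \queueavg\numsuccupdates$ that the queue is stable, citing Foster's theorem for recurrence. Up to that point you have reproduced the paper's proof, including its delicate point, which you rightly flag: lower-bounding the realized per-step obfuscating cost by $\learningcost(0,\oraclestate)$ requires that constant to be the \emph{cheapest} obfuscating cost over oracle states, whereas your normalization treats it as the maximum; this tension with the stated monotonicity of $\learningcost$ in $\statevar^\oraclesymbol$ is present in the paper as well, so it does not distinguish the two arguments.

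The genuine gap is in your final step. To pass from negative \emph{average} drift to recurrence you restrict attention to $\{\statevar^\learnersymbol\ge b\}$ and assert that there the policy is independent of the queue length ``by the threshold structure of Theorems~\ref{th:thresholdpolicy}--\ref{th:randomizedpolicy}.'' That is both too narrow and circular: Lemma~\ref{lemma:queue} quantifies over \emph{every} stationary policy satisfying the constraint, and an arbitrary feasible policy need not be threshold — the threshold structure is proved only for optimal policies; worse, the paper uses this lemma to guarantee that the class of feasible policies inducing recurrent chains is nonempty, which underpins the existence result of Theorem~\ref{th:randomizedpolicy}, so those theorems cannot be invoked inside this proof. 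Your block-averaging step also quietly requires the oracle chain to be irreducible and mixing, which (M1) does not assume. The paper sidesteps all of this with a simple dichotomy you should adopt: either the chain spends a positive long-run expected fraction of time at $\statevar^\learnersymbol=0$, in which case the empty state is visited at a positive rate and stability and recurrence follow immediately; or that fraction is zero, in which case essentially every step has a nonempty queue, the bookkeeping with $\indicator(\action=0,\statevar^\learnersymbol>0)$ becomes exact, and the comparison of the trajectory-averaged service rate against the arrival rate $\queueavg\numsuccupdates$ applies directly — no claim about the functional form of the policy on the interior, and no mixing of the oracle chain, is needed.
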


Lemma~\ref{lemma:queue} ensures that the optimization queue is stable. Since the policy of always transmitting satisfies the constraint, the space of policies that induce a recurrent Markov chain is non-empty. 

%

\subsection{Structural Results}
This subsection proves structural results for the optimal policy solving the MDP and the CMDP. The threshold structure of the optimal policy substantially reduces the search space and is used in devising the structured policy gradient algorithm. The following theorem proves that the optimal policy, $\policy^*$ solving the finite horizon MDP of (\ref{eq:valuefn}) has a threshold structure with respect to the learner state, $\statevar^\learnersymbol_\dpindex$. 
\begin{theorem}\label{th:thresholdpolicy}
    \textbf{(Nature of optimal policy $\policy^*$)} The optimal policy solving the finite horizon MDP of (\ref{eq:valuefn}) with assumptions (\textbf{M1-3}) is deterministic and monotonically increasing in the learner state, $\statevar^\learnersymbol$.
\end{theorem}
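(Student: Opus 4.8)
The plan is to establish the threshold structure by backward induction on the number of remaining queries $\dpindex$, maintaining the joint hypothesis that for every oracle state $\statevar^\oraclesymbol$ the value function $\valuefn_\dpindex(\statevar^\oraclesymbol,\cdot)$ is nondecreasing and integer-convex in the learner state $\statevar^\learnersymbol$. The base case $\dpindex=0$ is immediate from (M3), since $\valuefn_0(\statevar)=\learningcost(\statevar^\learnersymbol)$ is increasing and integer-convex with $\learningcost(0)=0$. Granting the hypothesis at stage $\dpindex-1$, monotonicity of the optimal decision rule $\decisionrule_\dpindex(\statevar^\oraclesymbol,\cdot)$ will follow from showing that $\stateactionfn_\dpindex$ is submodular in $(\statevar^\learnersymbol,\action)$, i.e. that the cost advantage of learning over obfuscating is nonincreasing in $\statevar^\learnersymbol$; a single-crossing argument in the spirit of Topkis then forces the minimizing action to switch once from $0$ to $1$ as $\statevar^\learnersymbol$ grows, which is exactly the asserted monotone (deterministic) structure.

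To compute that advantage I would use the cost structure of (M2): since $\privacycost(0,\cdot)=0$ and $\successfn(\statevar^\oraclesymbol,0)=0$ (no successful update can occur while obfuscating), the two $\stateactionfn$-values share a common averaged continuation term. Writing $\bar{\valuefn}_{\dpindex-1}(\statevar^\oraclesymbol,\statevar^\learnersymbol)=\sum_{\statevar^{\oraclesymbol\prime}}\transitionkernel(\statevar^{\oraclesymbol\prime}\mid\statevar^\oraclesymbol)\,\valuefn_{\dpindex-1}(\statevar^{\oraclesymbol\prime},\statevar^\learnersymbol)$, the learn-minus-obfuscate advantage reduces to $\privacycost(1,\statevar^\oraclesymbol)-\successfn(\statevar^\oraclesymbol,1)\,\Phi(\statevar^\learnersymbol)$, where $\Phi(\statevar^\learnersymbol):=\bar{\valuefn}_{\dpindex-1}(\statevar^\oraclesymbol,\statevar^\learnersymbol)-\bar{\valuefn}_{\dpindex-1}(\statevar^\oraclesymbol,\statevar^\learnersymbol-1)$ is the first difference of $\bar{\valuefn}_{\dpindex-1}$. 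Because a nonnegative mixture of integer-convex functions is integer-convex, $\bar{\valuefn}_{\dpindex-1}(\statevar^\oraclesymbol,\cdot)$ inherits convexity, so $\Phi$ is nondecreasing; multiplying by $-\successfn(\statevar^\oraclesymbol,1)\le 0$ makes the advantage nonincreasing in $\statevar^\learnersymbol$, giving submodularity and pinning down the threshold: learn precisely when $\Phi(\statevar^\learnersymbol)$ exceeds $\privacycost(1,\statevar^\oraclesymbol)/\successfn(\statevar^\oraclesymbol,1)$. For this $\statevar^\learnersymbol$-threshold it suffices that the oracle-averaging is a fixed convex combination; (M1) is what additionally preserves the monotone ordering in $\statevar^\oraclesymbol$ used in the companion ordering arguments.

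The main obstacle is closing the induction by showing $\valuefn_\dpindex(\statevar^\oraclesymbol,\cdot)=\min_{\action\in\actionspace}\stateactionfn_\dpindex(\statevar^\oraclesymbol,\cdot,\action)$ is again integer-convex, because the pointwise minimum of two convex functions is in general not convex. I would resolve this by exploiting the coupled structure rather than treating the two $\stateactionfn$-branches as arbitrary: with $\successfn(\statevar^\oraclesymbol,1)\le 1$ and $\privacycost\ge 0$, write $\valuefn_\dpindex(\statevar^\oraclesymbol,\statevar^\learnersymbol)=\bar{\valuefn}_{\dpindex-1}(\statevar^\oraclesymbol,\statevar^\learnersymbol)+\min\{0,\ \privacycost(1,\statevar^\oraclesymbol)-\successfn(\statevar^\oraclesymbol,1)\,\Phi(\statevar^\learnersymbol)\}$ and verify that the first differences of $\valuefn_\dpindex$ are nondecreasing by checking the three regimes: both states below the threshold, both above, and straddling it. In the straddling case the second difference equals a nonnegative convexity term of $\bar{\valuefn}_{\dpindex-1}$ plus $\successfn(\statevar^\oraclesymbol,1)\,\Phi(\statevar^{\learnersymbol*})-\privacycost(1,\statevar^\oraclesymbol)$ at the switching state $\statevar^{\learnersymbol*}$, which is nonnegative precisely because learning is optimal there, and the symmetric inequality just below the threshold uses that obfuscating is optimal. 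This is the delicate step, and it is exactly where the specific transition kernel (a success probability bounded by one acting on a convex continuation) together with the optimality of the threshold action are both essential. Once convexity is re-established the induction closes, every stagewise rule $\decisionrule_\dpindex$ is deterministic and monotone in $\statevar^\learnersymbol$, and hence so is the optimal policy $\policy^*=(\decisionrule_\numqueries,\dots,\decisionrule_1)$, proving the claim.
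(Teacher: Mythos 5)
Your proposal is correct, and its skeleton matches the paper's proof: backward induction propagating integer convexity (increasing first differences) of $\valuefn_\dpindex(\statevar^\oraclesymbol,\cdot)$ in the learner state, submodularity of $\stateactionfn_\dpindex$ in $(\statevar^\learnersymbol,\action)$, and a single-crossing argument yielding the deterministic monotone threshold rule. Where you genuinely differ is in how the induction is closed, i.e., how integer convexity of $\valuefn_\dpindex=\min_\action \stateactionfn_\dpindex$ is recovered despite the pointwise minimum. The paper argues generically: it takes the optimizing actions $\action_1,\action_2,\action_3$ at the states $\statevar^\learnersymbol+1,\statevar^\learnersymbol,\statevar^\learnersymbol-1$, telescopes the second difference of the value function into four bracketed terms, disposes of the two optimality-gap terms by sign, and bounds the remaining terms $A\geq B$ via the induction hypothesis (as printed, the decomposition needs some sign bookkeeping to reconstruct, but the idea goes through). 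You instead exploit the model structure $\privacycost(0,\cdot)=0$ and $\successfn(\cdot,0)=0$ to collapse $\stateactionfn_\dpindex(\statevar,0)$ to the averaged continuation $\bar{\valuefn}_{\dpindex-1}$, write $\valuefn_\dpindex=\bar{\valuefn}_{\dpindex-1}+\min\left\{0,\ \privacycost(1,\statevar^\oraclesymbol)-\successfn(\statevar^\oraclesymbol,1)\,\Phi\right\}$, and check the second-difference inequality regime by regime, invoking optimality of the action at and just below the switch point exactly where the convexity could fail; your two straddling computations are correct, reducing respectively to $(1-\successfn)\bigl(\Phi(\statevar^\learnersymbol+1)-\Phi(\statevar^\learnersymbol)\bigr)\geq 0$ and to the sign of $\successfn\,\Phi(\statevar^{\learnersymbol*})-\privacycost(1,\statevar^\oraclesymbol)$. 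Your route buys explicitness: it pins down the threshold as the first state where $\Phi$ exceeds $\privacycost(1,\statevar^\oraclesymbol)/\successfn(\statevar^\oraclesymbol,1)$, makes the delicate step fully transparent, and correctly isolates that (M1) plays no role in the $\statevar^\learnersymbol$-monotonicity (it is only needed for the companion ordering in $\statevar^\oraclesymbol$ in Lemma~\ref{lemma:monovalue}). The paper's route buys generality: its optimality-gap telescoping never uses the fact that obfuscation leaves the learner state unchanged, so it applies to variants with arbitrary action-dependent success probabilities, and it transfers verbatim to the discounted Lagrangian MDP of (\ref{eq:discountedmdp}) where the instantaneous cost $\cost(\action,\statevar^\oraclesymbol;\lagrange)$ is nonzero for both actions (though your decomposition also survives that extension, since a state-independent additive cost under obfuscation does not affect second differences).
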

Since the action space consists of $2$ actions, a monotonically increasing policy is a threshold policy (Fig.~\ref{fig:contributions}(a)),
\begin{align*}
    \policy^*(\statevar_\dpindex) = \begin{cases}
         0 = \texttt{obfuscate}, \ & \statevar_\dpindex^\learnersymbol < \thresfinite(\statevar^\oraclesymbol_\dpindex) \\
         1 = \texttt{learn}, \ & \text{otherwise}\\
    \end{cases},
\end{align*}
where $\thresfinite(\statevar^\oraclesymbol)$ is an oracle state dependent threshold and parametrizes the policy. 
The proof of Theorem \ref{th:thresholdpolicy} is in the Appendix and follows from Lemma \ref{lemma:monovalue}, supermodularity, and assumptions on the cost and transition probability matrix~\footnote{Incidentally, it can also be shown that the policy is threshold in the number of queries remaining $\dpindex$ but this result has been skipped since .}.

In order to characterize the structure of the optimal policy solving the CMDP, we first study an unconstrained Lagrangian average cost MDP with the instantaneous cost,
    $\cost(\action, \statevar^\oraclesymbol;\lagrange) = \privacycost(\action, \statevar^\oraclesymbol) + \lagrange \learningcost(\action, \statevar^\oraclesymbol)$, 
where $\lagrange$ is the Lagrange multiplier. The average Lagrangian cost for a policy $\policy$ is then given by, 
\begin{align*}
    \avgcost_{\statevar_0} (\policy,\lagrange) = \lim\sup_{\numqueries \to \infty} \frac{1}{\numqueries} \expectation_{\policy}\left[ \sum_{\dpindex = 1}^{\numqueries} \cost(\action_\dpindex, \statevar^\oraclesymbol_\dpindex; \lagrange) \mid \statevar_0 \right].
\end{align*}
The corresponding average Lagrangian cost MDP and optimal stationary policy are,
\begin{align} 
\begin{split}
     \avgcost_{\statevar_0}^*(\lagrange) = \inf_{\policy \in \policyspace} \avgcost_{\statevar_0} (\policy,\lagrange), \\ \label{eq:lagrangianavgcostpolicy}
    \policy_\lagrange^{*} = \underset{\policy \in \policyspace}{\arg\inf}\avgcost_{\statevar_0} (\policy,\lagrange).
\end{split}
\end{align}
Further, we treat the average Lagrangian cost MDP of (\ref{eq:lagrangianavgcostpolicy}) as a limiting case of the following discounted Lagrangian cost MDP when the discounting factor, $\discountfactor$ goes to 1,
\begin{align}\label{eq:discountedmdp}
\begin{split}
\avgcost^{\discountfactor}_{\statevar_0} (\policy,\discountfactor,\lagrange) &= \underset{{\numqueries \to \infty}}{\lim\sup} \ \expectation_{\policy}\left[ \sum_{\dpindex = 1}^{\numqueries} \discountfactor^\dpindex \cost(\action_\dpindex, \statevar^\oraclesymbol_\dpindex; \lagrange) \mid \statevar_0 \right]
\end{split}
\end{align}
Theorem \ref{th:thresholdpolicy} can then be extended to show that the optimal policy of (\ref{eq:discountedmdp}) has a threshold structure in terms of the learner state, $\statevar^\learnersymbol_\dpindex$. The average Lagrangian cost MDP of (\ref{eq:lagrangianavgcostpolicy}) is a limit of the discounted Lagrangian cost MDPs (\ref{eq:discountedmdp}) with an appropriate sequence of discount factors $(\discountfactor_\dpindex)$ converging to 1, and therefore has a threshold policy. The existence of a stationary policy for (\ref{eq:lagrangianavgcostpolicy}) is shown
in previous work~\citet{ngo_monotonicity_2010,sennott_average_1989} and is discussed in Appendix~\ref{sec:thresavgpolicy}. Hence we directly state a corollary from~\citet{sennott_average_1989} as a theorem, which shows that the stationary queuing policy for the CMDP in (\ref{eq:cmdp}) is a randomized mixture of
optimal policies for two average Lagrangian cost MDPs.

\begin{theorem}\textbf{(Existence of randomized stationary policy)}~\citep{sennott_constrained_1993}\label{th:randomizedpolicy}
There exists an optimal stationary policy solving the CMDP of (\ref{eq:cmdp}), which is a randomized mixture of optimal policies of two average Lagrangian cost MDPs,
    \begin{align}\label{eq:randomizedpolicy}
        \policy^{*} = \randomizedfactortrue  \policy^{*}_{\lagrange_1} +  (1 - \randomizedfactortrue)  \policy^{*}_{\lagrange_2},
    \end{align}
    where $\policy^{*}_{\lagrange_1}$ and $\policy^{*}_{\lagrange_2}$ are optimal policies for average Lagrangian cost MDPs of the form (\ref{eq:lagrangianavgcostpolicy}).
\end{theorem}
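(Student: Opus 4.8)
The plan is to treat the CMDP of (\ref{eq:cmdp}) by Lagrangian duality and then exploit the threshold structure already established for the unconstrained Lagrangian problem. First I would introduce the dual function $\avgcost^*_{\statevar_0}(\lagrange) = \inf_{\policy \in \policyspace} \avgcost_{\statevar_0}(\policy, \lagrange)$ of (\ref{eq:lagrangianavgcostpolicy}), which is concave in $\lagrange$ as a pointwise infimum of the affine maps $\lagrange \mapsto \avgprivacycost_{\statevar_0}(\policy) + \lagrange\, \avglearningcost_{\statevar_0}(\policy)$. Existence of a stationary optimal policy $\policy^*_\lagrange$ attaining this infimum for each fixed $\lagrange \geq 0$ has already been secured by the discounted-to-average limiting argument preceding the statement (sending $\discountfactor \to 1$ in (\ref{eq:discountedmdp})), and by Theorem~\ref{th:thresholdpolicy} each $\policy^*_\lagrange$ is a monotone threshold policy.

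The next step is to establish strong duality, i.e. that the optimal value of (\ref{eq:cmdp}) equals $\sup_{\lagrange \geq 0}[\avgcost^*_{\statevar_0}(\lagrange) - \lagrange \learningconstraint]$. Here I would invoke the feasibility guaranteed by Lemma~\ref{lemma:queue}: under the stated drift condition the always-learn policy induces a recurrent chain and, being the minimal-learning-cost policy, strictly satisfies the constraint, which furnishes the Slater-type interior point ruling out a duality gap. Let $\lagrange^\star$ denote an optimal dual multiplier. Because the family $\{\policy^*_\lagrange\}$ consists of threshold policies whose thresholds move monotonically with $\lagrange$, the induced average learning cost $\lagrange \mapsto \avglearningcost_{\statevar_0}(\policy^*_\lagrange)$ is a monotone piecewise-constant function; equivalently, by the envelope theorem the subdifferential $\partial \avgcost^*_{\statevar_0}(\lagrange^\star)$ is the convex hull of the learning costs of the Lagrangian-optimal policies, and at $\lagrange^\star$ this set must contain $\learningconstraint$.

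I would then pick two Lagrangian-optimal threshold policies at $\lagrange^\star$ that bracket the constraint, $\avglearningcost_{\statevar_0}(\policy^*_{\lagrange_1}) \leq \learningconstraint \leq \avglearningcost_{\statevar_0}(\policy^*_{\lagrange_2})$ (so that $\lagrange_1 = \lagrange_2 = \lagrange^\star$, realized by two distinct threshold policies differing at the jump of the arg-min). Since average costs are linear under randomization, the mixture $\policy^* = \randomizedfactortrue\,\policy^*_{\lagrange_1} + (1-\randomizedfactortrue)\,\policy^*_{\lagrange_2}$ has learning cost interpolating linearly between the two endpoints, so by the intermediate value property there is $\randomizedfactortrue \in [0,1]$ with $\avglearningcost_{\statevar_0}(\policy^*) = \learningconstraint$. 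Both constituents attain $\avgcost^*_{\statevar_0}(\lagrange^\star)$, hence so does $\policy^*$; being Lagrangian-optimal at $\lagrange^\star$ and feasible with equality, it meets the complementary-slackness saddle-point conditions and is therefore optimal for (\ref{eq:cmdp}), yielding the form (\ref{eq:randomizedpolicy}). Operationally the argument is exactly the conclusion of \citet{sennott_constrained_1993}, so the task reduces to verifying that paper's hypotheses in our setting.

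The main obstacle I anticipate is precisely this strong-duality and verification step on the denumerable learner state space $\statespace_\learnersymbol = \{0,1,\dots\}$: average-cost optimality for countable-state chains is not automatic, as it demands a solution to the average-cost optimality equation and uniform control of the relative value function. I would discharge this by checking the conditions of \citet{sennott_average_1989} — recurrence together with a negative-drift (Lyapunov) bound supplied by Lemma~\ref{lemma:queue}, plus boundedness of the per-step costs on each oracle state and the finiteness of $\statespace_\oraclesymbol$ and $\actionspace$ — which simultaneously legitimize the $\discountfactor \to 1$ passage and the dual attainment, and hence the applicability of \citet{sennott_constrained_1993}.
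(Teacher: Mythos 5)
Your proposal is correct and ultimately takes the same route as the paper: the paper does not reprove the mixing result at all, but states it as a corollary of \citet{sennott_constrained_1993}, and its only real proof obligation---discharged in Appendix~\ref{sec:thresavgpolicy} by verifying Assumptions 1--3' of \citet{sennott_average_1989} via the recurrence/first-passage argument from Lemma~\ref{lemma:queue} and the value-function monotonicity of Lemma~\ref{lemma:monovalue}---is precisely the hypothesis-verification step your final paragraph reduces to. Your sketch of the Lagrangian duality, subdifferential, and two-policy randomization is simply the internal content of the cited theorem (and is sound in outline); the one soft spot, your claim that the always-learn policy \emph{strictly} satisfies the constraint (Slater), is not established by Lemma~\ref{lemma:queue}, which only gives non-strict feasibility, but this is subsumed once you defer, as the paper does, to Sennott's own hypotheses.
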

Because of Theorem~\ref{th:randomizedpolicy}, the optimal policy solving the CMDP is a randomized mixture of two threshold policies and will also have a threshold structure. 
A threshold policy of the form of (\ref{eq:randomizedpolicy}) has two threshold levels (denoted by $\thresspsatrue_1$ for transition from action $0$ to randomized action and $\thresspsatrue_2$ for randomized action to action $1$) can be written as, 
\begin{align}\label{eq:optpolicycmdp}
    \policy^*(\statevar) = \begin{cases}
        0 ,& \ \statevar^\learnersymbol < \thresspsatrue_1   \\
        1 \ \text{w.p.} \ \randomizedfactortrue,& \ \thresspsatrue_1 \leq \statevar^\learnersymbol < \thresspsatrue_2   \\ 
        1 ,& \  \thresspsatrue_2 \leq \statevar^\learnersymbol 
    \end{cases}.
\end{align}

We next propose an efficient reinforcement learning algorithm that exploits this structure to learn the optimal stationary policy for solving the CMDP performing covert optimization. 
\subsection{Structured Policy Gradient Algorithm}\label{sec:spsa}


 \begin{algorithm}[b!]
\caption{Structured Policy Gradient Algorithm}
 \label{alg:spga}
 \textbf{Input: } Initial Policy Parameters $\parameterspsa_0$, Perturbation Parameter $\parameterperturb$, $\spsatimehorizon$ Iterations, Step Size $\stepspsa$, Scale Parameter $\scalespsa$, Learning cost $\learningcost$, Privacy Cost $\privacycost$\\ \textbf{Output: } Policy Parameters $\parameterspsa_\spsatimehorizon$
\begin{algorithmic} 
\Procedure{ComputeStationaryPolicy}{$\parameterperturb,\spsatimehorizon,\stepspsa,\scalespsa$} 
    \For{$\spsatimestep \gets 1 \dots \spsatimehorizon$}
        \State $\parameterchange \gets Bernoulli(\frac{1}{2})$ \Comment{$3 \times |\statespace_\oraclesymbol||\statespace_\queuesymbol|$ i.i.d. Bernoulli random variables}
        \State $\parameterspsa_\spsatimestep^{+} \gets \parameterspsa_\spsatimestep + \parameterchange*\parameterperturb $, $\parameterspsa_\spsatimestep^{-} \gets \parameterspsa_\spsatimestep - \parameterchange*\parameterperturb $,  $\hat{\learningcost} \gets \Call{AvgCost}{\learningcost,\parameterspsa_\spsatimestep}$
        \State $\hat{\Delta{\learningcost}} \gets \left(\Call{AvgCost}{(\learningcost,\parameterspsa_\spsatimestep^{+}} -\Call{AvgCost}{\learningcost,\parameterspsa_\spsatimestep^{-}}\right)$ , $\hat{\Delta{\privacycost}} \gets \left(\Call{AvgCost}{(\privacycost,\parameterspsa_\spsatimestep^{+}} -\Call{AvgCost}{\privacycost,\parameterspsa_\spsatimestep^{-}}\right)$ 
        \State $\Theta_\spsatimestep$ and $\constraintspsa_\spsatimestep$ using (\ref{eq:spsaparameterupdate})
    \EndFor
\EndProcedure
\Procedure{AvgCost}{$\avgcost,\parameterspsa$}
\State $\hat{\policy} \gets \Call{PolicyFromParameters}{\parameterspsa}$
\State $\hat{\avgcost} \gets \frac{1}{T}\sum_{t=1}^{T}\avgcost(\hat{\policy}(\statevar_t),\statevar_t)$
\EndProcedure
\end{algorithmic}
 \end{algorithm}
Both the finite horizon and infinite horizon MDPs can be solved using existing techniques of either value iteration or linear programming, but these methods require knowledge of the transition probabilities. Hence to search for the optimal policy without the knowledge of the transition probabilities, we propose a policy gradient algorithm.

In order to efficiently find the optimal policy of the form of (\ref{eq:optpolicycmdp}) we use an approximate sigmoidal policy $\hat{\policy}(\statevar,\parameterspsa)$ constructed as follows, 
\begin{align}\label{eq:approxrandomizedpolicy}
\hat{\policy}(\statevar,\parameterspsa) = \left(\frac{\randomizedfactor}{1 + \exp{\frac{-\statevar^L + \thresspsa_1}{\approxspsa}}} + \frac{1 - \randomizedfactor}{1 + \exp{\frac{-\statevar^L + \thresspsa_2}{\approxspsa}}}  \right),
\end{align}
where $\thresspsa_1$ and $\thresspsa_2$ is a parameter which approximates the thresholds, $\thresspsatrue_1$ and $\thresspsatrue_2$. Parameter $\tau$ controls how close the sigmoidal policy follows a discrete threshold policy. It can be shown that as $\tau \to 0$, $\randomizedfactor \to \randomizedfactortrue$, $\thresspsa_1 \to \thresspsatrue_1$ and $\thresspsa_2 \to 
 \thresspsatrue_2$, the approximate policy converges to the true policy $\hat{\policy}(\statevar,\parameterspsa) \to \policy^{*}$~\citep{ngo_monotonicity_2010,kushner_stochastic_2003}.

A policy gradient algorithm that finds an optimal policy of the form (\ref{eq:randomizedpolicy}) for solving the CMDP of (\ref{eq:cmdp}) is now proposed. For each oracle state, the parameters for the approximate policy of  (\ref{eq:approxrandomizedpolicy}) is given by $(\thresspsa_1,\thresspsa_2,\randomizedfactor)$. The complete set of policy parameters (total of $3 \times |\statespace_\oraclesymbol||\statespace_\queuesymbol|$ parameters) is referred to as, $\parameterspsa$. The procedure is summarized in Algorithm \ref{alg:spga}. 

Our policy gradient algorithm updates the parameter estimates $\parameterspsa_{\spsatimestep}$ using (\ref{eq:spsaparameterupdate}) by taking an approximate gradient of the average costs, the constraint in (\ref{eq:cmdp}) and $\constraintspsa_{\spsatimestep}$ which updates using (\ref{eq:spsaparameterupdate}). The parameters which are updated are chosen randomly using a vector $\parameterchange$, the components of which have an i.i.d. Bernoulli distribution. These chosen parameters are perturbed by $+\parameterperturb$ and $-\parameterperturb$ and the approximate gradient of the privacy and learning cost is denoted by $\approxgradient \privacycost$ and $\approxgradient \learningcost$, respectively. The approximate gradients of these costs are computed using a finite difference method on the approximate costs. The approximate average costs are computed by interacting with the Markovian oracle for $\timehorizonspsaapprox$ timesteps. The approximate average learning cost is denoted by $\hat{\learningcost}$. $\stepspsa$ and $\scalespsa$ are suitably chosen step and scale parameters~\footnote{The step size is constant if we want the SPGA to track change in the underlying system without convergence guarantees and decreasing otherwise.}. 
\begin{align}\label{eq:spsaparameterupdate}
\begin{split}\parameterspsa_{\spsatimestep + 1} = \parameterspsa_{\spsatimestep} - \stepspsa\left( \Delta \privacycost + \Delta \learningcost \times \max \left[ 0, \constraintspsa_\spsatimestep + \scalespsa\left(\hat{\learningcost} - \learningconstraint \right)  \right] \right)\\
        \constraintspsa_{\spsatimestep + 1} = \max \left[ \left( 1 - \frac{\stepspsa}{\scalespsa}\constraintspsa_{\spsatimestep} \right), \constraintspsa_{\spsatimestep} + \stepspsa\left(\hat{\learningcost} - \learningconstraint \right)  \right]
    \end{split}
\end{align}
The SPGA algorithm can run on a faster time scale by interacting with the system parallel to the SGD and updating the stationary policy parameters, $\parameterspsa$. The stationary policy controls the query posed and updates the SGD estimate in (\ref{eq:coupled}). The computational complexity for the SPGA algorithm described here is $O(|\statespace|)$, i.e., the algorithm is linear in the state space and hence significantly more scalable than standard policy gradient methods~\citep{kushner_stochastic_2003}. 
\hspace{-40mm}
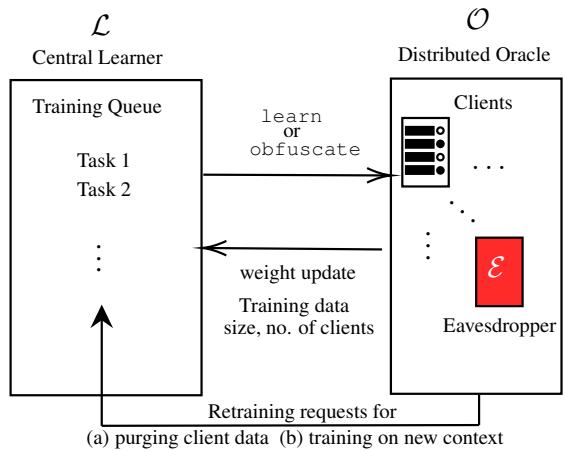
\begin{figure}
    \centering
    \vspace{-10mm}
    \tikzset{every picture/.style={line width=0.75pt}} 
\resizebox{0.5\columnwidth}{!}{
\begin{tikzpicture}[x=0.75pt,y=0.75pt,yscale=-1,xscale=1]

\draw    (211,126.5) -- (294,126.83) ;
\draw [shift={(296,126.83)}, rotate = 180.22] [color={rgb, 255:red, 0; green, 0; blue, 0 }  ][line width=0.75]    (10.93,-3.29) .. controls (6.95,-1.4) and (3.31,-0.3) .. (0,0) .. controls (3.31,0.3) and (6.95,1.4) .. (10.93,3.29)   ;
\draw    (291.83,160.33) -- (212.5,159.68) ;
\draw [shift={(210.5,159.67)}, rotate = 0.47] [color={rgb, 255:red, 0; green, 0; blue, 0 }  ][line width=0.75]    (10.93,-3.29) .. controls (6.95,-1.4) and (3.31,-0.3) .. (0,0) .. controls (3.31,0.3) and (6.95,1.4) .. (10.93,3.29)   ;
\draw  [] (123.84,83.67) -- (210.2,83.67) -- (210.2,226.33) -- (123.84,226.33) -- cycle ;
\draw   (301,100.83) -- (322,100.83) -- (322,131.83) -- (301,131.83) -- cycle ;
\draw  [color={rgb, 255:red, 0; green, 0; blue, 0 }  ,draw opacity=1 ][fill={rgb, 255:red, 255; green, 43; blue, 43 }  ,fill opacity=1 ] (334.4,155) -- (355.4,155) -- (355.4,186) -- (334.4,186) -- cycle ;
\draw  [] (295.84,83) -- (373,83) -- (373,225.67) -- (295.84,225.67) -- cycle ;
\draw    (335.72,226.17) -- (335.72,239.82) ;
\draw    (335.72,239.82) -- (165.06,240.16) ;
\draw    (165.06,188.25) -- (165.06,240.16) ;
\draw [shift={(165.06,185.25)}, rotate = 90] [fill={rgb, 255:red, 0; green, 0; blue, 0 }  ][line width=0.08]  [draw opacity=0] (10.72,-5.15) -- (0,0) -- (10.72,5.15) -- (7.12,0) -- cycle    ;
\draw  [color={rgb, 255:red, 0; green, 0; blue, 0 }  ,draw opacity=1 ][fill={rgb, 255:red, 0; green, 0; blue, 0 }  ,fill opacity=1 ] (302.67,104.89) -- (315.11,104.89) -- (315.11,108.33) -- (302.67,108.33) -- cycle ;
\draw  [color={rgb, 255:red, 0; green, 0; blue, 0 }  ,draw opacity=1 ][fill={rgb, 255:red, 0; green, 0; blue, 0 }  ,fill opacity=1 ] (302.67,110.81) -- (315.11,110.81) -- (315.11,114.26) -- (302.67,114.26) -- cycle ;
\draw  [color={rgb, 255:red, 0; green, 0; blue, 0 }  ,draw opacity=1 ][fill={rgb, 255:red, 0; green, 0; blue, 0 }  ,fill opacity=1 ] (302.67,116.74) -- (315.11,116.74) -- (315.11,120.18) -- (302.67,120.18) -- cycle ;
\draw  [color={rgb, 255:red, 0; green, 0; blue, 0 }  ,draw opacity=1 ][fill={rgb, 255:red, 0; green, 0; blue, 0 }  ,fill opacity=1 ] (302.67,122.67) -- (315.11,122.67) -- (315.11,126.11) -- (302.67,126.11) -- cycle ;
\draw   (316.78,106.56) .. controls (316.78,105.76) and (317.42,105.11) .. (318.22,105.11) .. controls (319.02,105.11) and (319.67,105.76) .. (319.67,106.56) .. controls (319.67,107.35) and (319.02,108) .. (318.22,108) .. controls (317.42,108) and (316.78,107.35) .. (316.78,106.56) -- cycle ;
\draw  [fill={rgb, 255:red, 0; green, 0; blue, 0 }  ,fill opacity=1 ] (316.78,112.56) .. controls (316.78,111.76) and (317.42,111.11) .. (318.22,111.11) .. controls (319.02,111.11) and (319.67,111.76) .. (319.67,112.56) .. controls (319.67,113.35) and (319.02,114) .. (318.22,114) .. controls (317.42,114) and (316.78,113.35) .. (316.78,112.56) -- cycle ;
\draw   (316.78,118.56) .. controls (316.78,117.76) and (317.42,117.11) .. (318.22,117.11) .. controls (319.02,117.11) and (319.67,117.76) .. (319.67,118.56) .. controls (319.67,119.35) and (319.02,120) .. (318.22,120) .. controls (317.42,120) and (316.78,119.35) .. (316.78,118.56) -- cycle ;
\draw  [fill={rgb, 255:red, 0; green, 0; blue, 0 }  ,fill opacity=1 ] (316.78,124.56) .. controls (316.78,123.76) and (317.42,123.11) .. (318.22,123.11) .. controls (319.02,123.11) and (319.67,123.76) .. (319.67,124.56) .. controls (319.67,125.35) and (319.02,126) .. (318.22,126) .. controls (317.42,126) and (316.78,125.35) .. (316.78,124.56) -- cycle ;

\draw (158.53,52.73) node [anchor=north west][inner sep=0.75pt]    {$\mathcal{L}$};
\draw (328.16,50) node [anchor=north west][inner sep=0.75pt]    {$\mathcal{O}$};
\draw (337.96,162.33) node [anchor=north west][inner sep=0.75pt]    {$\mathcal{\textcolor[rgb]{1,1,1}{E}}$};
\draw (235.36,95.28) node [anchor=north west][inner sep=0.75pt]  [font=\scriptsize] [align=left] {{\fontfamily{pcr}\selectfont {\scriptsize learn}}};
\draw (230.36,110.33) node [anchor=north west][inner sep=0.75pt]  [font=\scriptsize] [align=left] {{\fontfamily{pcr}\selectfont {\scriptsize obfuscate}}};
\draw (245.69,103.44) node [anchor=north west][inner sep=0.75pt]  [font=\scriptsize] [align=left] {{\scriptsize or}};
\draw (225.97,165.72) node [anchor=north west][inner sep=0.75pt]  [font=\scriptsize] [align=left] {{\scriptsize weight update}};
\draw (225.72,180.89) node [anchor=north west][inner sep=0.75pt]  [font=\scriptsize] [align=left] {{\scriptsize Training data}};
\draw (157.06,228.66) node [anchor=north west][inner sep=0.75pt]  [font=\scriptsize] [align=left] {\begin{minipage}[lt]{145.74pt}\setlength\topsep{0pt}
\begin{center}
{\scriptsize Retraining requests for }
\end{center}
{\scriptsize (a) purging client data \ (b) training on new context}
\end{minipage}};
\draw (152.5,115) node [anchor=north west][inner sep=0.75pt]  [font=\scriptsize] [align=left] {{\scriptsize Task 1}};
\draw (323,84.53) node [anchor=north west][inner sep=0.75pt]  [font=\scriptsize] [align=left] {\begin{minipage}[lt]{18pt}\setlength\topsep{0pt}
\begin{center}
{\scriptsize Clients}
\end{center}

\end{minipage}};
\draw (123.5,90.67) node [anchor=north west][inner sep=0.75pt]  [font=\scriptsize] [align=left] {\begin{minipage}[lt]{57pt}\setlength\topsep{0pt}
\begin{center}
{\scriptsize Training Queue}
\end{center}

\end{minipage}};
\draw (107.67,68.67) node [anchor=north west][inner sep=0.75pt]  [font=\scriptsize] [align=left] {\begin{minipage}[lt]{80.68pt}\setlength\topsep{0pt}
\begin{center}
{\scriptsize Central Learner}
\end{center}

\end{minipage}};
\draw (285.2,67.13) node [anchor=north west][inner sep=0.75pt]  [font=\scriptsize] [align=left] {\begin{minipage}[lt]{70.23pt}\setlength\topsep{0pt}
\begin{center}
{\scriptsize Distributed Oracle}
\end{center}

\end{minipage}};
\draw (152.5,128.5) node [anchor=north west][inner sep=0.75pt]  [font=\scriptsize] [align=left] {{\scriptsize Task 2}};
\draw (319.4,127.6) node [anchor=north west][inner sep=0.75pt]    {$\ddots $};
\draw (330.8,121.2) node [anchor=north west][inner sep=0.75pt]    {$\dotsc $};
\draw (309.6,143.2) node [anchor=north west][inner sep=0.75pt]    {$\vdots $};
\draw (318,185.8) node [anchor=north west][inner sep=0.75pt]  [font=\scriptsize] [align=left] {\begin{minipage}[lt]{33.84pt}\setlength\topsep{0pt}
\begin{center}
{\scriptsize Eavesdropper}
\end{center}

\end{minipage}};
\draw (160.6,148.7) node [anchor=north west][inner sep=0.75pt]    {$\vdots $};
\draw (218.72,190.89) node [anchor=north west][inner sep=0.75pt]  [font=\scriptsize] [align=left] {{\scriptsize size, no. of clients}};

\end{tikzpicture}
}
    \vspace{-10mm}
    \caption{Learner $\learner$ either learns from the oracle $\oracle$ (distributed clients) or obfuscates the eavesdropper $\eaves$ based on the oracle state (number of clients) and the queue state (number of successful training rounds). Learner (central aggregator in FL) updates its queue state based on the response (size of the training data) and the action taken.}
    \label{fig:example}
\end{figure}
\section{Hate speech classification in a federated setting}\label{sec:example}
 We now present a novel application of the covert optimization framework in designing a robust hate speech classifier illustrated in Figure~\ref{fig:example}. The task of the learner $\learner$ is to minimize a classification loss function ($\function(\functionvar)$) and simultaneously hide the optimal classifier (neural network) parameters ($\arg\min \function(\functionvar)$) from an eavesdropper ($\eaves$).

\textbf{State Space:} In our federated setting, the oracle state, $\statevar^\oraclesymbol$ denotes the number of clients participating in each communication round, and more clients indicate a better state. Client participation is assumed to be Markovian since it is more general than i.i.d. participation and is closer to a real-world scenario~\citep{sun_markov_2018,rodio_federated_2023}. Although the stochastic aspect of the oracle is modeled on client participation and the quantity of the training data, it can also be modeled with respect to the quality. For example, in hate speech detection and similar applications, unintended bias based on characteristics like race, gender, etc. often occurs~\citep {dixon_measuring_2018}. If the oracle is based on how diverse the training data is, we can train when the available data is good enough and obfuscate otherwise using costs related to biased classification~\citep{viswanath_quantifying_2022}. The learner state, $\statevar^\learnersymbol$, denotes the number of remaining successful gradient updates. The learner decides the total number of required successful gradient updates based on convergence criteria, like the one in Theorem~\ref{th:noofupdates}. A successful gradient update is done if the number of available training data is above a threshold, similar to communication skipping using system parameters in~\citet{sun_lazily_2022,mishchenko_proxskip_2022}. This is a proxy for the threshold of $\noisetolerance$ in (A2) since an exact noise bound is difficult to obtain in practice. The optimization queue receives new arrivals, $\statevar^\queuesymbol$ for model retraining due to client requests for unlearning their data, data distribution shifts, and active learning~\citep{bachman_learning_2017,sekhari_remember_2021,cai_theory_2021}. The timescale for practical federated training ranges from a few hours to a few days~\citep{hard_federated_2019}. In the hate speech classification, model retraining requests due to a  shift in context is on a slower time scale, but purging requests for a client's data can be made every few hours. All devices, including the ones not participating in the training round can make such a request ensuring $\statevar^\queuesymbol$ is independent of $\statevar^\oraclesymbol$ (E1).

\textbf{Action Space:} Using labeled datasets, GANs can be trained to generate hate speech~\citep {lin_adversarial_2017,de_masson_d_autume_training_2019}. But since access to labeled data is difficult, 
 an eavesdropper $\eaves$ can use the optimal weights as discriminator weights to train a generator~\citep{ferrari_transferring_2018}. The action space, $\actionspace$, sends the correct learning neural network parameters or the obfuscating parameters. We discuss in the next section how to generate obfuscating parameters under two different eavesdroppers' information scenarios.

\textbf{Costs:} Although for this paper, only a majority of queries need to be obfuscating, in general, the more learning queries an eavesdropper knows, the higher probability of the eavesdropper figuring out the optimal weights. Hence the eavesdropper can generate hate speech and misinformation which are semantically more coherent (measured by metrics of readability and perplexity~\citep{carrasco-farre_fingerprints_2022,mimno_optimizing_2011}) and can be spread easily~\citep{viswanath_quantifying_2022}. Hence the privacy cost in the MDP can be associated with metrics of the spread of malicious content, including prevalance~\citep{wang_entailment_2021}, and contagion spread rates~\citep{davani_hate_2021,lawson_tribalism_2023}. Analogously, the learning cost $\learningcost$ can be associated with the same set of costs since malicious content will go unclassified if the classifier does not achieve the desired accuracy. For delays in forgetting a client's information, the learning cost is analogous to the value of private information, measured by metrics like Shapeley value~\citep{kleinberg_value_2001,wang_principled_2020}. Also, clients might want to remove their data because they incorrectly annotated it earlier, and keeping the retraining task in the queue can worsen the real-time accuracy~\citep{davani_hate_2021,klie_annotation_2023}. The privacy and learning cost and the learning constraint, $\learningconstraint$ of (\ref{eq:cmdp}), can be chosen appropriately depending on the maximum ratio of the queries the learner can afford to expose, which for the proportional sampling estimator is 0.5.
 
\subsection{Numerical Results} \label{sec:numerical}
For the first experiment, the MDP formulation and structural results are demonstrated on a hate speech classification task under a federated setting described above. The convergence of the threshold parameters in SPGA is investigated in the second experiment. It is empirically shown that the optimal policy solving (\ref{eq:cmdp}) is threshold in the oracle state, demonstrating that the optimal policy makes the learner learns more when the oracle is good~\footnote{Additional benchmark experiments on the MNIST data, dataset preprocessing, the architecture, and assumptions are listed in the Appendix.}. Before discussing the first numerical study, we explain two scenarios with respect to the information that the eavesdropper has and the information the learner has about the eavesdropper. The results of the study are then presented under both scenarios.

\textbf{Scenario 1: Eavesdropper does not have enough data: }
When the eavesdropper does not have enough data to choose between the two SGDs, the obfuscating queries can be posed in various ways, for example, by posing noisy queries sampled randomly from domain $\domain$ or by doing a mirrored gradient descent using the true queries in (\ref{eq:advestimate}). Since the eavesdropper has no information, the prior over the two SGDs is the same~\footnote{
An extension of this scenario is when the learner is trying to obfuscate hyperparameters which are essential to training a neural network~\citep{probst_tunability_2019} from the eavesdropper. The learner can switch between the intended hyperparameter and a suitably chosen obfuscating hyperparameter.}. 

\textbf{Scenario 2: Eavesdropper has a subset of data, but the learner has information about the subset: }  
In case the eavesdropper has access to a subset of the data, it can test out both the trajectories on its dataset to see which one has a smaller empirical loss. Let the dataset of the eavesdropper be $\dataset_0 \in \dataset$. If the learner knows $\dataset_0$~\footnote{For example, when the eavesdropper is using a public dataset, accessing a reliable and balanced dataset is otherwise costly. The learner having incomplete information about the eavesdropper's dataset is left for future research. The case when the eavesdropper has access to all data is not relevant since then the eavesdropper can carry out the optimization on its own.}, then it 
 can simulate an oracle with function $\function^\prime(\functionvar) = \function(\functionvar,\dataset_0) = \frac{1}{|\dataset_0|}\sum_{d \in \dataset_0}  \lossfunction(\functionvar, d) $ to obtain noisy gradients, $\response^\prime_\timeindex =  \nabla \function^\prime(\query_\timeindex) + \noise^\prime_\timeindex$ where $\lossfunction$ is the loss function and $\noise^\prime_\timeindex$ is suitably simulated noise. 
 The obfuscating queries can be obtained using the following SGD trajectory: $
    \advestimate_\timeindex = \advestimate_{\timeindex - 1} - \stepsize_k \response^\prime_\timeindex \indicator \left( \action_\timeindex = 0 \right)$.
The weights obtained using the parallel SGD do not generalize well since the empirical loss being minimized is for a subset of the data. 
This makes the prior over both the SGDs more balanced since the eavesdropper can no longer take advantage of its dataset. And because the parallel SGD is observed the majority of the time, then the eavesdropper's estimate (\ref{eq:propsampling}) corresponds to the parallel SGD. 

\begin{figure}
\hfill
\subfloat[Eavesdropper with a public dataset having no positive samples.]{\includegraphics[width=0.45\columnwidth]{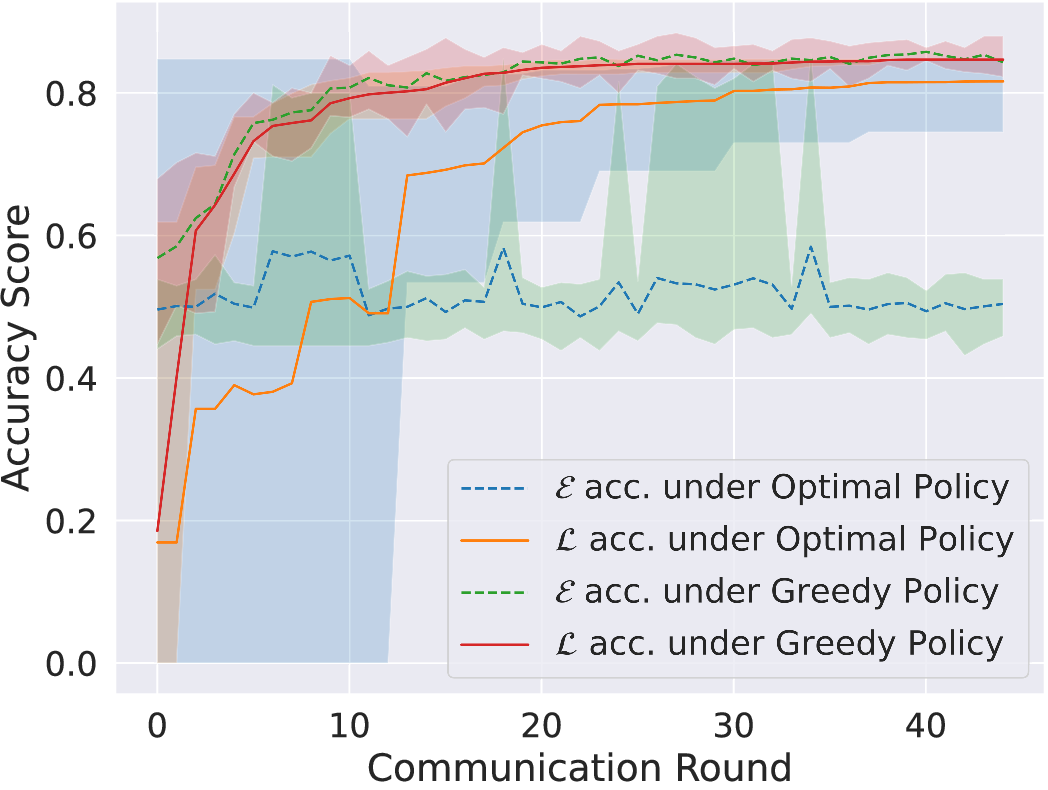}}
\hfill
\subfloat[Eavesdropper with a dataset having 10\% positive samples.]{\includegraphics[width=0.45\columnwidth]{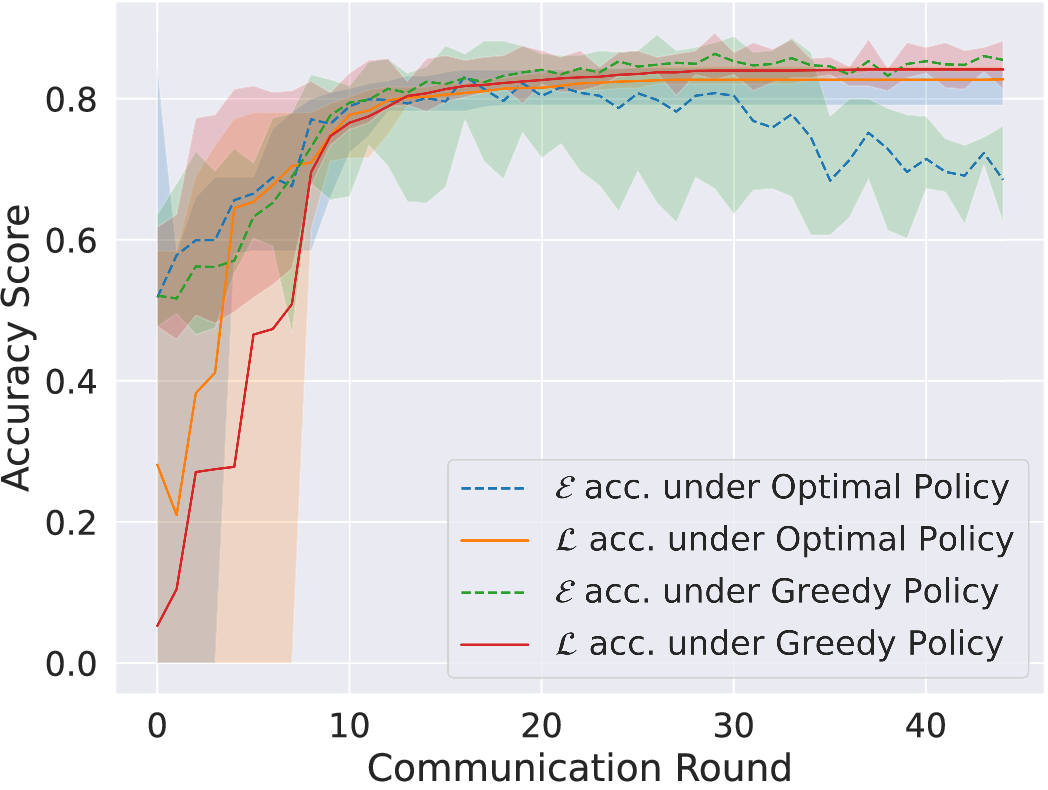}}
\hfill
\caption{Convergence of validation accuracies for Learner $\learner$ and Eavesdropper $\eaves$ under greedy and optimal policy. The optimal policy helps dynamically learn and hide the optimal weights compared to a greedy policy of always learning.}
\label{fig:exp1}
\end{figure}

\subsection{Demonstration of MDP framework on Covert Optimization in Hate Speech classification}
A hate speech classification task is considered where an extended version of a pre-trained BERT~\citep{devlin_bert_2019,turc_well-read_2019} model is fine-trained on a labeled dataset to classify textual data as toxic or not~\footnote{The dataset used was made public by Jigsaw AI and can be found {\color{blue}\href{https://www.kaggle.com/c/jigsaw-unintended-bias-in-toxicity-classification/data}{here}}. Hate speech classification is still an open problem and the achieved accuracy is barely satisfactory but our aim was to show the application of our formulation. Our source code can be found on this anonymized {\color{blue}\href{https://anonymous.4open.science/r/CovertHateSpeechClassification-475E}{link}}.}. A federated setting with $20$ clients whose data is non-overlapping is considered. Each client has $5443$ training samples and $1443$ validation samples. For the experimental results, we consider $N = 45$ communication rounds (or queries) and $M = 16$ successful model updates (which is around $
\sim34\%$ of the total queries). To demonstrate the versatility of our formulation, we use the FedAvg algorithm, where the learner aggregates the weights of the individual clients rather than the weight updates~\citep{mcmahan_communication-efficient_2017}. The experiment is done with 
$10$ random seeds to obtain error bounds. A threshold on the noise bound ($\noisetolerance$) is approximated with the threshold on the number of data points, i.e., the learner discards any communication round when less than $\nicefrac{1}{4}$ of the entire dataset is available for training\footnote{Recent work in federated learning has proposed skipping training rounds when the distributed oracle is not good enough leading to less communication rounds and better convergence rates~\citep{chen_lag_2018,sun_lazily_2022,mishchenko_proxskip_2022}.}. The underlying Markov chain of the device participation has three states, i.e. either $\oraclestate_1 = \nicefrac{1}{4}$, $\oraclestate_2 = \nicefrac{1}{2}$ or $\oraclestate_3 = \nicefrac{1}{1}$ of the devices participating in any communication round. Each device can contribute any number of data points out of the available datapoints, and for the chosen criteria, the success function empirically comes out to be $\successfn(\statevar^\oraclesymbol,1) = [0.1,0.43,0.95]$. The transition probabilities between the oracle states is given by $\transitionkernel^\oraclesymbol = [0.8 \ 0.2 \ 0; 0.3 \ 0.5 \ 0.2; 0 \ 0.2 \ 0.8]$.   The empirical success function  The privacy cost, $\privacycost$ is taken to be $0.3, 0.8, 1.8$ for the respective oracle states and the learning cost, $\learningcost$ to be $0.6$\footnote{The choice of the cost is done by interacting with the system and seeing the average proportion of queries obfuscated using the stationary policy as shown in Appendix~\ref{sec:percentobfuscated}. This cost can  also be tuned in an online fashion based on the number of slots available, the learner's preference, and the percentage of queries the learner can afford to expose hence enabling practical realizability of previous work on covert optimization~\cite{xu_learner-private_2021}.}. Since this is a finite horizon MDP, the learner can use linear programming or value iteration to find an optimal policy if it has empirical estimates of the transition probabilities using past data or can use the SPGA algorithm and interact with the system to find a stationary sub-optimal policy~\footnote{The SPGA algorithm does not need the SGD or FedAvg to run rather it just needs data on how many clients are participating and how many data points are available to characterize the state of the system and successful response, hence this algorithm can be run in the background of the actual algorithm, and both of them form a two-time scale Markovian system with the time indices denoted by $\dpindex$ and $\timeindex$ respectively. }. We show our results using a stationary policy obtained by using the SPGA algorithm, hence the learner does not know $\probability(\statevar_{\dpindex + 1}|\statevar_\dpindex,\action_\dpindex)$.


 Figure \ref{fig:exp1}(a) shows the convergence of the aggregated validation accuracies for the learner and the eavesdropper under Scenario 1 under a greedy and stationary policy. The loss function considered is the binary cross entropy loss function, and the accuracy is the validation accuracy score. The eavesdropper accuracy is calculated using a balanced validation dataset of size $2886$. It can be seen that although the learner's accuracy, on average goes up to $0.85$, the eavesdropper's accuracy goes up to $0.52$.
 Figure \ref{fig:exp1}(b) is for Scenario 2 with an eavesdropper with an imbalanced dataset of $10\%$ toxic (one of the two classes) examples. This dataset is assumed to be public, and the learner has complete access to it, which it uses to obfuscate the eavesdropper. In Figure \ref{fig:exp1}(b), it is evident that the obfuscation achieved is lesser than when the eavesdropper had no toxic samples since the eavesdropper is able to achieve an accuracy of around~$0.69$, explained by the fact that the obfuscating parameters are trained on a sample of the entire dataset. Although, in both cases, when the learner uses a greedy policy, the eavesdropper's accuracy is at par with the learner's accuracy. This demonstrates how using the optimal policy, the learner can prevent an eavesdropper from learning the optimal weights of the classifier. 

\begin{wrapfigure}{r}{0.5\textwidth}
\includegraphics[width=0.5\columnwidth]{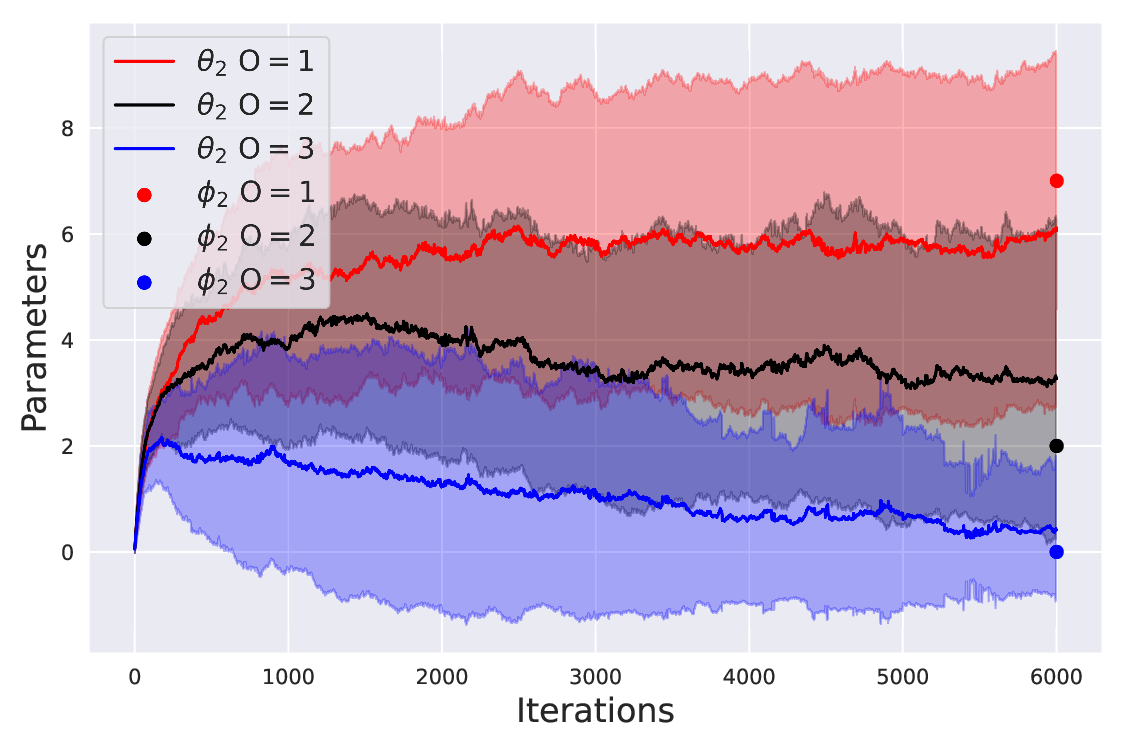}
\caption{Convergence of threshold parameters in Alg. 1 for different oracle states. The optimal policy incentivizes learning more when the oracle is in a better state.}
    \label{fig:spsa}
\end{wrapfigure}
\subsubsection{Convergence of SPGA algorithm and numerical structural result}

The convergence of Algorithm~\ref{alg:spga} to the true threshold parameters is investigated next. In addition to the previously defined parameters, a learning constraint $\learningconstraint = 0.2$ is imposed on the average learning rates, setting up a CMDP whose objective is given by~(\ref{eq:cmdp}). The arrival probability for $\numsuccupdates = 4$ queries is $\queueavg = 0.1$~\footnote{This is slightly different from the theoretical model since it is not possible to simulate an infinite buffer, we consider a length $40$ queue, and to prevent a queue overflow a high learning cost of $100$ is imposed in case the queue full. This consideration is similar to previous work on network queueing using a CMDP approach and does not change the threshold and monotone nature of the policy~\citep{djonin_mimo_2007}.}. For calculating the approximate average cost, we take a sample path of $100$ timesteps. The results are averaged over $100$ runs. The convergence of the threshold parameter $\thresspsatrue_2$ for different oracle states with arrival state $E = 0$ is plotted in Figure~\ref{fig:spsa} along with the true threshold parameters found by linear programming.
The approximate thresholds, $\thresspsa_2$ of the sigmoidal policy of (\ref{eq:approxrandomizedpolicy}) converge close to the true threshold parameters, $\thresspsatrue_2$ without the knowledge of the transition probabilities~\footnote{The threshold for the oracle state 3 converges to a negative value which when plugged into the sigmoidal policy of (\ref{eq:approxrandomizedpolicy}) resemble a threshold policy with threshold at 0 since the learner state can not be negative.}. It can be numerically seen that the threshold of optimal policy decreases with increasing oracle state; that is, the optimal policy is non-increasing in the oracle state hence the learner poses a learning query more often when the oracle is in a good state. The parameters for the SPGA algorithm along with an additional experiment with constant step size is given in Appendix~\ref{sec:spsaextra}.

\section{Conclusion}
The problem of covert optimization is studied from a dynamic perspective when the oracle is stochastic, and the learner receives new optimization requests. The problem is modeled as a Markov decision process, and structural results are established for the optimal policy. A linear time policy gradient algorithm is proposed, and the application of our framework is demonstrated in a hate speech classification context.
Future work can look at inverse RL techniques for the eavesdropper to infer the optimal learner policy, and more robust gradient trajectory hiding schemes can be studied. The problem of covert optimization can be investigated in a decentralized setting where the problem is modeled as a switching control game with participants switching between learning and obfuscating others. Our suggested methodology can dynamically control learning in distributed settings for objectives like energy efficiency, client privacy, and client selection. An eavesdropper with finite memory and an objective of minimizing average learning cost with a constraint on average privacy cost can be considered. This work's main broader ethical concerns are a) an increase in energy consumption to achieve covertness and b) it could help a learner covertly train a classifier for questionable reasons, e.g., censorship. 
\bibliography{references}
\bibliographystyle{tmlr}
\appendix
\section{Appendix A: Experimental Parameters and Methodology}
\subsection{Dataset and Preprocessing}
We use Jigsaw's \textit{Unintended Bias in Toxicity Classification} dataset for our experimental results. The dataset has $\sim1.8$ million public comments from the Civil Comments platform. The dataset was annotated by human raters for toxic conversational attributes, mainly rating the toxicity of each text on a scale of 0 to 1  and sub-categorizing for severe toxicity, obscene, threat, insult, identity attacks, and sexually explicit content. More information about the annotation process can be found on the Kaggle website for this dataset~\href{https://www.kaggle.com/c/jigsaw-unintended-bias-in-toxicity-classification/data}{here}. We consider the much simpler task of classifying the text as toxic or not. The toxicity is scored by human volunteers on a scale of 0 to 1, and we consider a comment toxic if the toxicity score is greater than $0.5$. The original dataset is imbalanced with $1660540$ non-toxic samples and $144334$ toxic samples, and for each experimental run, we take a random balanced subset with $144334$ toxic and non-toxic samples. The reason we do this is twofold, a) to reduce the time it takes to train our model and make it feasible to run the clients concurrently on our machine, and b) to study the accuracy that the eavesdropper achieves on the positive samples more profoundly. To achieve b), we could also have taken a weighted accuracy function. 
We preprocess the data by removing special characters and contracting the word space (for example, replacing ``can't'' and ``cant'' with the same word). 

\subsection{Architecuture, training hyperparameters, and loss functions}
Our architecture used for training involves the following layer sequence: A pre-trained BERT layer which outputs a 128-length embedding, a fully connected 128 neurons wide linear layer with ReLU activation, a dropout layer with a rate of $10^{-1}$ and finally, a linear layer classifying the text as hate speech or not. The motivating reason for choosing this architecture was that this was the standard template in many of the submissions received in the competition. However, as highlighted before, our approach is both architecture and convergence algorithm-agnostic. We consider the logit loss function. We use the following hyperparameters for training: learning rate: $10^{-3}$, training batch size of $40$, and validation batch size of $20$. To demonstrate the versatility of our methods, we optimize our neural network using Adam~\citep{kingma_adam_2017} and run FedAvg~\citep{mcmahan_communication-efficient_2017} instead of FedSGD. Using the preprocessed training data, we fine-train our model to minimize the binary cross entropy loss (BCE).  
 
\subsection{Markov decision process parameters}
We consider $20$ clients and client participation is simulated using a Markov chain with the states being $5$ clients, $10$ clients, and $20$ clients. In each training round, each participating client chooses between $0$ to $N_\text{batches}$ batches. We perform $N = 45$ training rounds with $M = 20$ successful updates. The accuracies are accuracy scores evaluated on the validation dataset averaged across clients. 
 At any given communication round, we assume that the eavesdropper takes whatever weight trajectory occupies the majority number of communication rounds up till that round.

\subsection{Additional benchmark experiments on MNIST dataset}\label{sec:benchmarkmnist}
We further conduct experiments on an image recognition task on the MNIST data in a federated setting to a) benchmark our methods on a standard dataset and b) study the effect of varying eavesdropper and Markov chain parameters on the effectiveness of our approach. Also, the image recognition task is more computationally efficient than the hate speech classification task. We can perform a lot more runs of the experiment ($20$ runs of the Hate Speech Classification task took around $\sim23$ hours, whereas, within the same time frame, we could do $1040$ runs of the image classification task).

\subsubsection{Varying eavesdropper parameters}
We use a Markov chain which is identical to Experiment 1. We set $\numsuccupdates = 30$ successful gradient updates out of $120$ communication rounds for this task. We report our results for $20$ and $50$ clients.

Our experimental results are reported in Table \ref{tab:add_1}
and Table \ref{tab:add_2} and we summarize our key findings below:
\begin{itemize}
    \item We vary the size of the training data the eavesdropper has compared to the size of a participating client. We consider three cases: the size of the eavesdropper's training set is $10\%, 40\%$, or $100\%$ of the size of the participating client's size. We observe that as the training size even with the obfuscated weights, the eavesdropper's accuracy improves since the eavesdropper can learn on its own just well enough. 
    \item We also vary the number of classes the eavesdropper has more samples of and the proportion of data for these classes to other classes. We consider three cases: $2, 5$, and $8$ "good" classes that the eavesdropper composes $99\%$ or $90\%$ of the data. The case when all classes are evenly distributed is also benchmarked against. We observe that both the number of good classes and a more balanced dataset improve the eavesdropper accuracy.
    \item We conclude that for the case of a limited information eavesdropper, the optimal policy performs much better than a greedy policy with the eavesdropper accuracy having a difference of as much as $~51\%$ (for the case with $2$ good classes which form $99\%$ of the training data with $40\%$ of the size.).
\end{itemize}
\begin{table}[b]
\centering
\begin{tabular}{@{}lcccc@{}}
\toprule
               & \multicolumn{2}{c}{FedSGD}               & \multicolumn{2}{c}{$7$ oracle states}    \\ \midrule
Type of Policy & Eavesdropper accuracy & Learner accuracy & Eavesdropper accuracy & Learner accuracy \\
Greedy  & 0.89 & 0.89 & 0.93 & 0.88 \\
Optimal & 0.56 & 0.88 & 0.34 & 0.88 \\ \bottomrule
\end{tabular}
\caption{Additional results on MNIST dataset using a) FedSGD and b) $7$ oracle states demonstrate versatality of our framework and robustness to the choice of system parameters respectively.}
\label{tab:supresults}
\end{table}
\subsubsection{Results on FedSGD}
For completeness, we demonstrate our results on FedSGD as well with $75$ successful updates in $240$ communication queries. We summarize our results in Table~\ref{tab:supresults} averaged for $20$ experiment runs. The eavesdropper is assumed to have $4$ good classes composing 90\% of the data and 10\% of the training data size. We see that the pattern is similar to the FedAvg case but with slower convergence, with the average learner accuracy going up to $88\%$ while the eavesdropper is stalled up to $56\%$ while the learner uses the optimal policy.

\subsubsection{Varying Markov chain parameters}\label{sec:percentobfuscated}

\begin{wrapfigure}{r}{0.5\textwidth}
\vspace{-10mm} \includegraphics[width=0.5\columnwidth]{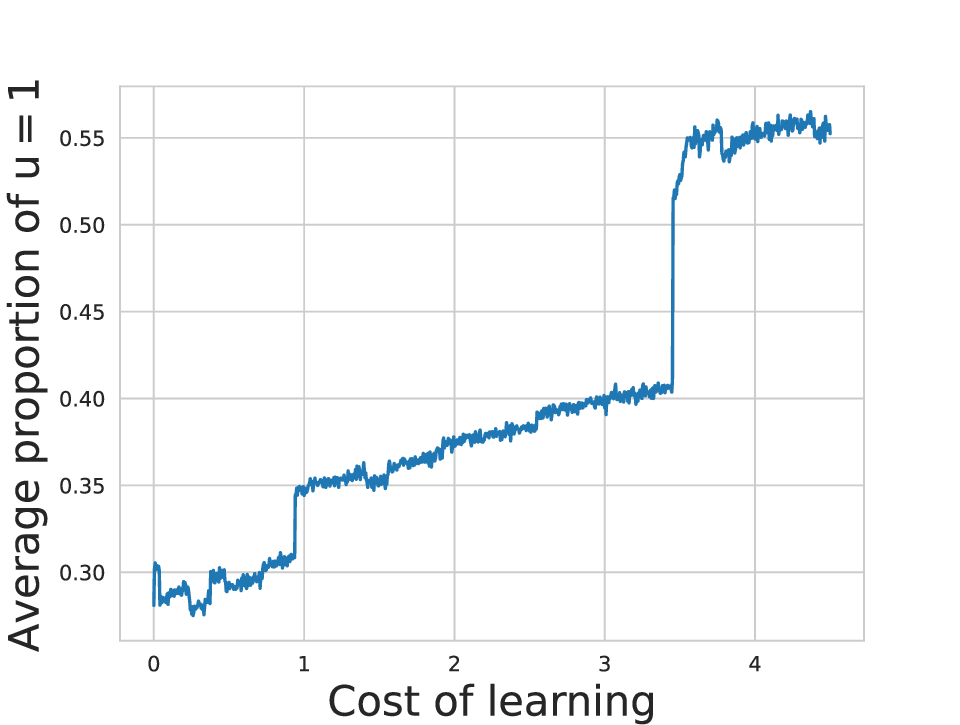}
\caption{Empirical results on the effect of the cost of learning on the average proportion of learning queries.}
\label{fig:costvsobfuscation}
\vspace{-5mm}
\end{wrapfigure}

We consider a setup similar to experiment 1 with $100$ clients and fix the eavesdropper parameters to have $2$ good classes composing 99\% of the data and 100\% of the training data size. The number of oracle states is increased to $7$ with the device counts as $[36,41,45,50,55,58,100]$ and the threshold is set to $\nicefrac{1}{8}^\text{th}$ of the dataset. The emperical success probabilities are found to be $[0.01,0.12,0.41,0.75,0.94,0.96,1]$. The probability transition matrix has a structure similar to the previous one and can be found in the code. 

The results are summarized in Table~\ref{tab:supresults} averaged over $30$ runs, and we see trends similar to the case with $3$ oracle states (the eavesdropper accuracy is much less since the training data is now distributed over 100 clients so the eavesdropper data is not big enough) . The possible reason for the eavesdropper having a better accuracy in a greedy scheme could be that the weights that the eavesdropper estimates are optimal and the eavesdropper has more time slots to train. 

We also investigate the effect of learning cost, $\learningcost$, on the average number of times action $\action = 1$ (learn) is taken when using the stationary policy in Fig.~\ref{fig:costvsobfuscation}. This helps illustrate how our learning cost can be chosen to achieve a desired level of learner-privacy (percentage of queries that should be obfuscated), which are set based on theoretical results~\citep{xu_optimal_2021}. The average cost is calculated over $1000$ for different simulation runs. The graph seems piecewise linear with jumps which could be explained by the nature of the solution of the occupation measure for the MDP, which is a vertex of a polytope defined by the corresponding linear program and jumps to next vertex with an increase in the instantenous cost.  
\subsection{Discussion on an algorithm without any stochastic considerations: random policy}
In the controlled SGD setting presented in this paper, the learner discards certain queries which are evaluated unsuccessfully according to a chosen noise constant. Controlling for stochasticity helps filter out communication rounds that will perform poorly due to insufficient clients and training data. When there is an appreciable disparity between the good and bad states, such filtering leads to a significant performance difference. Additionally, our assumption is more general than a single noise constant. The learner can control the noise threshold constant, $\noisetolerance$, which enables the characterization of the finite number of updates to optimize $\function$. 

An approach without such considerations is different, it will be equivalent to using a random policy in placing $\numsuccupdates$ learning queries in $\numqueries$ total queries. Our approach can solve this task by modifying the SGD update step and updating the learner state regardless of $\sigma_\timeindex$. For a practical setting, such a scheme would still perform better than the random policy since the obfuscation would be the same, but the learning would be done when the practical parameters, like the number of clients and data points, are better. In our numerical experiments with the MNIST data, using a random policy of placing $30$ learning queries in $120$ communication rounds, the learner obtained a average accuracy of $80.5\%$ against an accuracy $92.5\%$ obtained using the optimal policy.

 \subsection{SPGA algorithm parameters and additional experiment}\label{sec:spsaextra}
For analysing the convergence of the SPGA algorithm, we choose the step size $\stepspsa_\spsatimestep = \frac{0.5}{\spsatimestep}$, the scale parameter as $\scalespsa = 20$ and the initial constraint parameter as $\constraintspsa = 10$. The initial condition for the learner state is set to be $\statevar^\learnersymbol = 40$ and the oracle is state $\statevar^\oraclesymbol = 3$ when interacting with the system.

In Fig.~\ref{fig:spsa_extra}, we also demonstrate how with a constant step size the SPGA algorithm is able to track changes in the underlying policy parameters. Before iteration $2000$ the underlying Markov chain has parameters same as the previous SPGA experiment except the arrival rate which is $3\%$ for $\numsuccupdates = 10$ updates. After the $2000$ iteration, this changes to $\numsuccupdates = 4$ updates, with an arrival rate of $10\%$. The success probabilities and the oracle state transition is  taken to be, $\successfn = [0.1, 0.6, 0.9]$ and $P^\oraclesymbol = [0.7 \ 0.2 \ 0.1; 0.3 \ 0.1 \ 0.7; 0.2 \ 0.2 \ 0.7]$. The results are averaged over $100$ runs. It can be seen from the figure that even though the convergence is not as close as the decreasing step size, the SPGA algorithm tracks changes in the system. This effect is more prominently visible for oracle state $\statevar^\oraclesymbol = 1$ since the change in the true parameters is significant. 

\begin{figure}
\centering
\includegraphics[width=0.5\columnwidth]{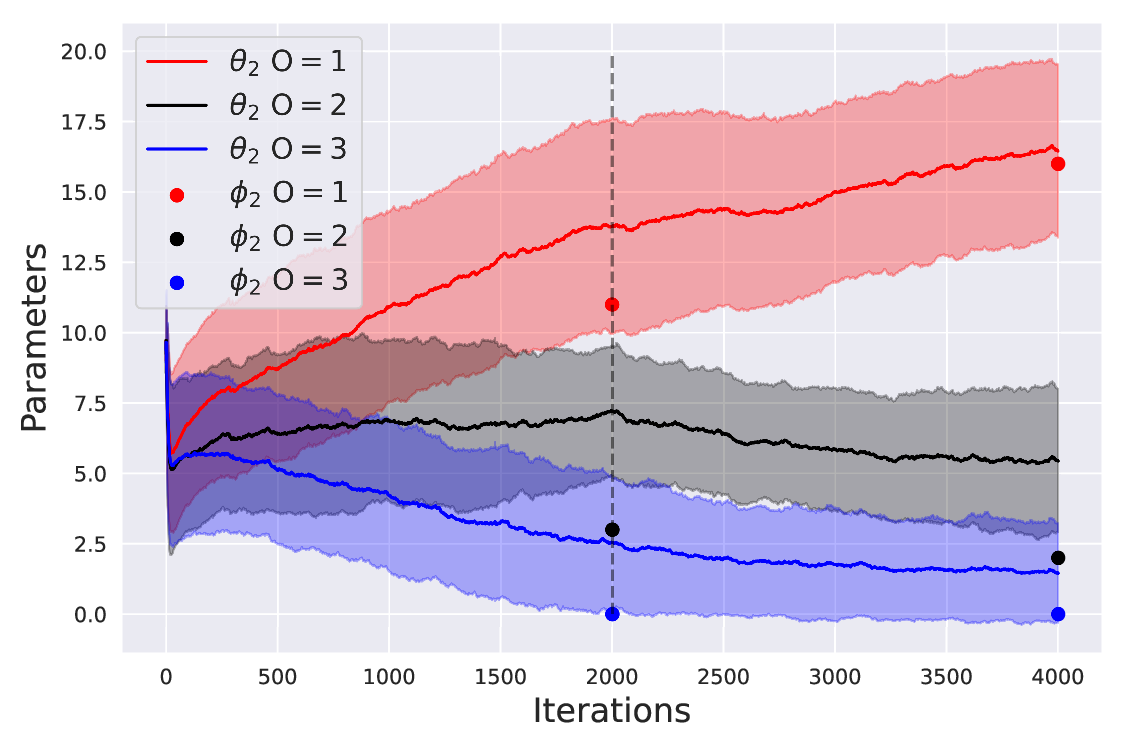}
\caption{Convergence of threshold parameters in Alg. 1 for different oracle states with constant step sizes. This trajectory for the parameter estimates is more erroneous, has only weak convergence results but can track changes in the underlying true parameters.}
    \label{fig:spsa_extra}
\end{figure}


\begin{table}[]
\centering
\resizebox{\textwidth}{!}{%
\begin{tabular}{ccccccc}
\hline
\multicolumn{3}{c}{$\eaves$ Dataset Parameters} &
  \multicolumn{2}{c}{$\eaves$ Accuracy} &
  \multicolumn{2}{c}{$\learner$ Accuracy} \\ \hline
\multicolumn{1}{c}{No. of Good Classes} &
  \multicolumn{1}{c}{Prop. of Training Data} &
  \multicolumn{1}{c}{Prop. of Good Classes} &
  \multicolumn{1}{c}{Greedy Policy} &
  \multicolumn{1}{c}{Optimal Policy} &
  \multicolumn{1}{c}{Greedy Policy} &
  \multicolumn{1}{c}{Optimal Policy} \\ \hline
2 & 0.1 & 0.99 & 0.857 & 0.336 & 0.925 & 0.832 \\
2 & 0.1 & 0.9  & 0.857 & 0.725 & 0.925 & 0.832 \\
2 & 0.1 & 0.2  & 0.857 & 0.927 & 0.925 & 0.832 \\
2 & 0.4 & 0.99 & 0.859 & 0.445 & 0.924 & 0.815 \\
2 & 0.4 & 0.9  & 0.859 & 0.907 & 0.924 & 0.815 \\
2 & 0.4 & 0.2  & 0.859 & 0.973 & 0.924 & 0.815 \\
2 & 1   & 0.99 & 0.850 & 0.632 & 0.930 & 0.822 \\
2 & 1   & 0.9  & 0.850 & 0.916 & 0.930 & 0.822 \\
2 & 1   & 0.2  & 0.850 & 0.968 & 0.930 & 0.822 \\
5 & 0.1 & 0.99 & 0.843 & 0.784 & 0.924 & 0.817 \\
5 & 0.1 & 0.9  & 0.843 & 0.900 & 0.924 & 0.817 \\
5 & 0.1 & 0.5  & 0.843 & 0.932 & 0.924 & 0.817 \\
5 & 0.4 & 0.99 & 0.834 & 0.707 & 0.922 & 0.819 \\
5 & 0.4 & 0.9  & 0.834 & 0.920 & 0.922 & 0.819 \\
5 & 0.4 & 0.5  & 0.834 & 0.970 & 0.922 & 0.819 \\
5 & 1   & 0.99 & 0.848 & 0.820 & 0.926 & 0.815 \\
5 & 1   & 0.9  & 0.848 & 0.961 & 0.926 & 0.815 \\
5 & 1   & 0.5  & 0.848 & 0.977 & 0.926 & 0.815 \\
8 & 0.1 & 0.99 & 0.830 & 0.898 & 0.926 & 0.819 \\
8 & 0.1 & 0.9  & 0.830 & 0.957 & 0.926 & 0.819 \\
8 & 0.1 & 0.8  & 0.830 & 0.943 & 0.926 & 0.819 \\
8 & 0.4 & 0.99 & 0.868 & 0.898 & 0.926 & 0.809 \\
8 & 0.4 & 0.9  & 0.868 & 0.936 & 0.926 & 0.809 \\
8 & 0.4 & 0.8  & 0.868 & 0.955 & 0.926 & 0.809 \\
8 & 1   & 0.99 & 0.864 & 0.943 & 0.931 & 0.831 \\
8 & 1   & 0.9  & 0.864 & 0.975 & 0.931 & 0.831 \\
8 & 1   & 0.8  & 0.864 & 0.975 & 0.931 & 0.831 \\ \hline
\end{tabular}%
}
\caption{Additional experiments on the MNIST data with $20$ clients showcase how varying different eavesdropper parameter changes the accuracy the eavesdropper is able to achieve.}
\label{tab:add_1}
\end{table}


\begin{table}[ht]
\centering
\resizebox{\textwidth}{!}{%
\begin{tabular}{@{}ccccccc@{}}
\toprule
\multicolumn{3}{c}{$\eaves$ Dataset Parameters} &
  \multicolumn{2}{c}{$\eaves$ Accuracy} &
  \multicolumn{2}{c}{$\learner$ Accuracy} \\ \midrule
\multicolumn{1}{c}{No. of Good Classes} &
  \multicolumn{1}{c}{Prop. of Training Data} &
  \multicolumn{1}{c}{Prop. of Good Classes} &
  \multicolumn{1}{c}{Greedy Policy} &
  \multicolumn{1}{c}{Optimal Policy} &
  \multicolumn{1}{c}{Greedy Policy} &
  \multicolumn{1}{c}{Optimal Policy} \\ \midrule
2 & 0.1 & 0.99 & 0.861 & 0.320 & 0.860 & 0.840 \\
2 & 0.1 & 0.9  & 0.861 & 0.611 & 0.860 & 0.840 \\
2 & 0.1 & 0.2  & 0.861 & 0.852 & 0.860 & 0.840 \\
2 & 0.4 & 0.99 & 0.884 & 0.318 & 0.862 & 0.830 \\
2 & 0.4 & 0.9  & 0.884 & 0.734 & 0.862 & 0.830 \\
2 & 0.4 & 0.2  & 0.884 & 0.895 & 0.862 & 0.830 \\
2 & 1   & 0.99 & 0.852 & 0.393 & 0.859 & 0.843 \\
2 & 1   & 0.9  & 0.852 & 0.864 & 0.859 & 0.843 \\
2 & 1   & 0.2  & 0.852 & 0.955 & 0.859 & 0.843 \\
5 & 0.1 & 0.99 & 0.875 & 0.461 & 0.638 & 0.838 \\
5 & 0.1 & 0.9  & 0.875 & 0.655 & 0.638 & 0.838 \\
5 & 0.1 & 0.5  & 0.875 & 0.807 & 0.638 & 0.838 \\
5 & 0.4 & 0.99 & 0.843 & 0.595 & 0.854 & 0.833 \\
5 & 0.4 & 0.9  & 0.843 & 0.795 & 0.854 & 0.833 \\
5 & 0.4 & 0.5  & 0.843 & 0.916 & 0.854 & 0.833 \\
5 & 1   & 0.99 & 0.864 & 0.000 & 0.857 & 0.840 \\
5 & 1   & 0.9  & 0.864 & 0.000 & 0.857 & 0.840 \\
5 & 1   & 0.5  & 0.864 & 0.000 & 0.857 & 0.840 \\
8 & 0.1 & 0.99 & 0.868 & 0.000 & 0.856 & 0.839 \\
8 & 0.1 & 0.9  & 0.868 & 0.000 & 0.856 & 0.839 \\
8 & 0.1 & 0.8  & 0.868 & 0.000 & 0.856 & 0.839 \\
8 & 0.4 & 0.99 & 0.841 & 0.805 & 0.862 & 0.832 \\
8 & 0.4 & 0.9  & 0.841 & 0.916 & 0.862 & 0.832 \\
8 & 0.4 & 0.8  & 0.841 & 0.914 & 0.862 & 0.832 \\
8 & 1   & 0.99 & 0.861 & 0.000 & 0.853 & 0.840 \\
8 & 1   & 0.9  & 0.861 & 0.000 & 0.853 & 0.840 \\
8 & 1   & 0.8  & 0.861 & 0.000 & 0.853 & 0.840 \\ \bottomrule
\end{tabular}%
}
\caption{Additional experiments on the MNIST data with $50$ clients showcase how varying different eavesdropper parameter changes the accuracy the eavesdropper is able to achieve and how our framework can be extended to more number of clients.}
\label{tab:add_2}
\end{table}



\section{Appendix B: Proofs}

\subsection{Proof of Theorem \ref{th:noofupdates}}
\begin{proof}
We follow standard convergence proving techniques, assumptions A1 and A2, and our definitions to complete this proof. 
At every successful update (Def. 1), from (A2) and (\ref{eq:estimateupdate}), $\expectation[||\nabla \function(\learnerestimate_{\timeindex_{m}}) + \eta_\timeindex||^2] \leq \sigma_k^2 \leq \sigma^2$. Now by the assumption of smoothness and application of descent lemma, using the descent lemma after $m$ successful updates at indices $\timeindex_1, \dots, \timeindex_{m}$ the sequence $\functionvar_{\timeindex_1}, \dots, \functionvar_{\timeindex_{m}}$ is such that~\citep{bottou_stochastic_2004,kushner_stochastic_2003,ghadimi_stochastic_2013}, 
\begin{align}
\begin{split}
    \function(\learnerestimate_{\timeindex_{m + 1}}) &\leq  \function(\learnerestimate_{\timeindex_{m}}) + (\nabla \function(\learnerestimate_{\timeindex_{m}}))^T(\learnerestimate_{\timeindex_{m+1}}- \learnerestimate_{\timeindex_{m}} ) + \frac{L}{2}||\learnerestimate_{\timeindex_{m+1}}- \learnerestimate_{\timeindex_{m}}||^2_2 \\
    \function(\learnerestimate_{\timeindex_{m + 1}}) &\leq  \function(\learnerestimate_{\timeindex_{m}}) - \stepsize_{\timeindex_m}(\nabla \function(\learnerestimate_{\timeindex_{m}}))^T(\nabla \function(\learnerestimate_{\timeindex_{m}}) + \eta_\timeindex) + \frac{L\stepsize_\timeindex^2}{2}||\nabla \function(\learnerestimate_{\timeindex_{m}}) + \eta_\timeindex||^2_2 \\
    \function(\learnerestimate_{\timeindex_{m + 1}}) &\leq  \function(\learnerestimate_{\timeindex_{m}}) - \stepsize_{\timeindex_{m}}||\nabla \function(\learnerestimate_{\timeindex_{m}})||^2_2  + \stepsize_{\timeindex_{m}}\left<\nabla \function(\learnerestimate_{\timeindex_{m}}),\eta_{\timeindex_{m}}\right> \ + \frac{L\stepsize_{\timeindex_{m}}^2}{2}\noisetolerance^2 \\
\end{split}
\end{align}
Taking expectation with respect to $\noise_{\timeindex_{m}}$, is $\expectation\left[ \left<\nabla \function(\learnerestimate_{\timeindex_{m}}),\noise_{\timeindex_{m}}\right> \right] =  0$ (A2), bounding $||\noise_{\timeindex_{m}}||^2_2$ and rearranging the terms~\citep{ghadimi_stochastic_2013}. We also drop the 2 subscript in the $L$-2 norm.

\begin{align*}
    \begin{split}
       \stepsize_{\timeindex_{m}}  \expectation\left[ ||\nabla \function(\learnerestimate_{\timeindex_{m}})||^2 \right]  \leq - \function(\learnerestimate_{\timeindex_{m + 1}}) + \function(\learnerestimate_{\timeindex_{m}}) +\frac{L\stepsize_{\timeindex_{m}}^2}{2}\noisetolerance^2 \\
    \end{split}
\end{align*}
Summing over the $\numsuccupdates$ successful gradient updates and assuming the function is lower bounded by $\function^*$, 
\begin{align*}
    \begin{split}
      \sum \stepsize_{\timeindex_{m}}\expectation\left[ \nabla \function(\learnerestimate_{\timeindex_{m}}) \right] \leq  - \function(\learnerestimate_{\timeindex_{M + 1}}) + \function(\learnerestimate_{\timeindex_{1}}) + \frac{M \lipschitz \stepsize_{\timeindex_{m}}^2}{2}\noisetolerance^2 \\
    \end{split}
\end{align*}
Now, using the fact that the learner estimate $\learnerestimate$ is chosen as $\learnerestimate_{\timeindex_\indexone}$ with probability $\frac{\stepsize_{\timeindex_\indexone}}{\sum \stepsize_{\timeindex_\indextwo}}$, we can show that $\expectation\left[||\nabla \function (\learnerestimate)||^2\right] = \frac{1}{\sum \stepsize_{\timeindex_\indextwo}} \sum \stepsize_{\timeindex_{m}}\expectation\left[ \nabla \function(\learnerestimate_{\timeindex_{m}}) \right]$. Substituting this in the above inequality and changing the variable in the denominator sum, we get,
\begin{align*}
    \expectation\left[||\nabla \function (\learnerestimate)||^2\right] &\leq  \left(- \function^* + \function(\learnerestimate_{\timeindex_{1}}) + \frac{ L\sum \stepsize_{\timeindex_{m}}^2}{2}\noisetolerance^2 \right) \frac{1}{\sum \stepsize_{\timeindex_{m}}}
\end{align*}
For stepsize of $\stepsize_{\timeindex_m} = \nicefrac{1}{m}$ and $M$ successful steps, 
\begin{align*}
    \min_{k \in \{k_1, \dots, k_m\}} \expectation\left[||\nabla f(\learnerestimate_k)||^2\right] = O(\frac{\lipschitz\noisetolerance^2}{\log M}).
\end{align*}

Hence, for $m \geq c_1\left(\exp{\left(\frac{L \sigma^2}{2 \epsilon}\right)}\right)$, the following estimate, 
\begin{align}\label{eq:learnerestimate}
    \learnerestimate = \underset{k \in \{k_1, \dots, k_m\}}{\arg\min} \expectation\left[||\nabla \function(\learnerestimate_\timeindex)||^2\right]
\end{align}
is $\epsilon$-close for some constant $c_1$. Hence there exists $M \leq c_2\left(\exp{\left(\frac{L \sigma^2}{2 \epsilon}\right)}\right)$ where $c_2 > c_1$ is some constant such that for $m = M$ (\ref{eq:learnerestimate}) is an $\epsilon$-close estimate.
\end{proof}

\subsection{Proof of Lemma \ref{lemma:queue}}
\begin{proof}

Let $\policy$ be a querying policy satisfying the constraint of (\ref{eq:cmdp}), then
\begin{align*}
\begin{split}
    \learningconstraint \geq \lim \sup_{\numqueries \to \infty} \frac{1}{\numqueries} \expectation \left[ \sum_{\dpindex=1}^\numqueries \learningcost(0,\statevar^\oraclesymbol_\dpindex = \oraclestate_\numoraclestates)\indicator(a_n=0,\statevar_\dpindex^\learnersymbol > 0) 
    \mid \statevar_0 \right]
\end{split}
\end{align*}
If $\lim \sup_{\numqueries \to \infty} \frac{1}{\numqueries} \expectation \left[ \sum_{\dpindex=1}^\numqueries \indicator(\statevar_\dpindex^\learnersymbol = 0) | \statevar_0 \right] > 0 $ then the queue is stable since the queue returns to the state $\statevar_\dpindex^\learnersymbol = 0$ infinitely often and hence the Markov chain is also recurrent. 

Otherwise if $\lim \sup_{\numqueries \to \infty} \frac{1}{\numqueries} \expectation \left[ \sum_{\dpindex=1}^\numqueries \indicator(\statevar_\dpindex^\learnersymbol = 0) | \statevar_0 \right] = 0 $
the stability of the queue can be shown by proving that the average 
successful transmissions (denoted by $r_{\statevar_0}(\policy)$) under the policy $\policy$ is greater than the average arrival rate $\queueavg \numsuccupdates$,
\begin{align*}
    &r_{\statevar_0}(\policy) = \liminf_{\numqueries \rightarrow \infty} \frac{1}{\numqueries} \expectation_\policy \left[ \sum_{\dpindex=1}^{\numqueries}\successfn(\action_\dpindex,\statevar_\dpindex^\oraclesymbol)\indicator(\action_\dpindex \neq 0) | \statevar_0 \right] \geq  \liminf_{\numqueries \xrightarrow{} \infty} \frac{1}{\numqueries} \expectation_\policy \left[ \sum_{\dpindex=1}^{\numqueries} \successfn_{\min}\indicator(\action_\dpindex \neq 0) | \statevar_0 \right] \\& 
    \geq  \successfn_{\min}\left( 1 - \limsup_{\numqueries \xrightarrow{} \infty} \frac{1}{\numqueries} \expectation_\policy  \left[ \sum_{n=1}^{\numqueries} \indicator(\action_\dpindex = 0, \statevar_\dpindex^\learnersymbol > 0) | \statevar_0 \right]\right)
    \geq  \successfn_{\min}\left( 1 - \frac{\learningconstraint}{\learningcost(0,\statevar^\oraclesymbol = \oraclestate))} \right) \geq \queueavg \numsuccupdates 
\end{align*}
Since the average successful learning rate is greater than the average query arrival rate, this induces a stable buffer, and due to Foster's Theorem, this induces a recurrent Markov chain. 
\end{proof}
\subsection{Proof of Theorem \ref{th:thresholdpolicy}}
We state the following lemma which is key to prove Theorem~\ref{th:thresholdpolicy}.
\begin{lemma}\label{lemma:monovalue}
    \textbf{(Monotonicy of value function $\valuefn$)} The value function $\valuefn_\dpindex(\statevar)$ is decreasing in number of queries left, $\dpindex$ and oracle state, $\statevar^\oraclesymbol$ and increasing in learner state, $\statevar^\learnersymbol$. 
\end{lemma}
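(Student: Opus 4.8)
The plan is to prove all three monotonicities by backward induction on the number of queries remaining $\dpindex$, carrying the learner-state and oracle-state statements as a joint induction hypothesis, and to obtain the monotonicity in $\dpindex$ by a separate pointwise comparison of consecutive value functions. The base case is the terminal value $\valuefn_0(\statevar)=\learningcost(\statevar^\learnersymbol)$, which by (M3) is increasing in $\statevar^\learnersymbol$ and is constant (hence trivially decreasing) in $\statevar^\oraclesymbol$. For the inductive step I would substitute the transition law (\ref{eq:probabilitytransition})'s finite-horizon analogue into the state--action cost and work with
\begin{align*}
\stateactionfn_\dpindex(\statevar,\action) = \privacycost(\action,\statevar^\oraclesymbol) + \sum_{o} \probability(o \mid \statevar^\oraclesymbol)\Big[\successfn(\statevar^\oraclesymbol,\action)\,\valuefn_{\dpindex-1}(o,\statevar^\learnersymbol-1) + (1-\successfn(\statevar^\oraclesymbol,\action))\,\valuefn_{\dpindex-1}(o,\statevar^\learnersymbol)\Big],
\end{align*}
proving each monotonicity for $\stateactionfn_\dpindex(\cdot,\action)$ at fixed $\action$; since $\valuefn_\dpindex=\min_\action \stateactionfn_\dpindex$, the minimum of a family of monotone functions inherits the monotonicity.

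For the learner-state claim (increasing in $\statevar^\learnersymbol$), the point is that $\successfn$ and $\privacycost$ do not depend on $\statevar^\learnersymbol$, so incrementing $\statevar^\learnersymbol$ shifts both successor learner states $\statevar^\learnersymbol-1$ and $\statevar^\learnersymbol$ upward by one with the weights $\successfn,1-\successfn$ unchanged; the hypothesis that $\valuefn_{\dpindex-1}$ is increasing in $\statevar^\learnersymbol$ then makes each term, and hence $\stateactionfn_\dpindex(\cdot,\action)$, increasing in $\statevar^\learnersymbol$. For the monotonicity in $\dpindex$ (decreasing in queries remaining), the key observation is that the obfuscate action is free and idle: $\privacycost(0,\cdot)=0$ and $\successfn(\statevar^\oraclesymbol,0)=0$, so $\stateactionfn_\dpindex(\statevar,0)=\sum_o \probability(o\mid\statevar^\oraclesymbol)\valuefn_{\dpindex-1}(o,\statevar^\learnersymbol)$. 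This yields the base comparison $\valuefn_1(\statevar)\le\stateactionfn_1(\statevar,0)=\learningcost(\statevar^\learnersymbol)=\valuefn_0(\statevar)$, and because the kernel is time-homogeneous, the pointwise inequality propagates: if $\valuefn_\dpindex\le\valuefn_{\dpindex-1}$ everywhere then $\stateactionfn_{\dpindex+1}(\statevar,\action)\le\stateactionfn_\dpindex(\statevar,\action)$ for each $\action$, and taking minima gives $\valuefn_{\dpindex+1}\le\valuefn_\dpindex$.

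The main obstacle is the oracle-state claim (decreasing in $\statevar^\oraclesymbol$), since the oracle state enters $\stateactionfn_\dpindex(\statevar,1)$ through three coupled channels: the privacy cost $\privacycost(1,\statevar^\oraclesymbol)$, the transition law $\probability(\cdot\mid\statevar^\oraclesymbol)$, and the success probability $\successfn(\statevar^\oraclesymbol,1)$. Fixing $\statevar^\learnersymbol$, I would set $h(o,\statevar^\oraclesymbol)=\successfn(\statevar^\oraclesymbol,1)\valuefn_{\dpindex-1}(o,\statevar^\learnersymbol-1)+(1-\successfn(\statevar^\oraclesymbol,1))\valuefn_{\dpindex-1}(o,\statevar^\learnersymbol)$. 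Using the learner-state monotonicity already established at level $\dpindex-1$, the map $h(o,\cdot)$ is decreasing in $\statevar^\oraclesymbol$ provided $\successfn(\cdot,1)$ is nondecreasing in the oracle state (a better oracle places more weight on the smaller value $\valuefn_{\dpindex-1}(o,\statevar^\learnersymbol-1)$), while $h(\cdot,\statevar^\oraclesymbol)$ is decreasing in $o$ by the oracle-state hypothesis; this success-probability monotonicity is not listed among (M1)--(M4) but is the natural companion of (M1) and is exactly the structure of the empirical success function in the experiments. For $i>j$ I would then telescope
\begin{align*}
\sum_o \probability(o\mid i)h(o,i) - \sum_o \probability(o\mid j)h(o,j) = \sum_o \probability(o\mid i)\big[h(o,i)-h(o,j)\big] + \sum_o \big[\probability(o\mid i)-\probability(o\mid j)\big]h(o,j),
\end{align*}
where the first sum is nonpositive because $h(o,\cdot)$ is decreasing, and the second is nonpositive by the first-order stochastic dominance ordering (M1) applied to the decreasing function $h(\cdot,j)$ (a summation-by-parts on the tail sums). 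Adding $\privacycost(1,\statevar^\oraclesymbol)$, which is decreasing in $\statevar^\oraclesymbol$ by (M2), gives $\stateactionfn_\dpindex(\cdot,1)$ decreasing in $\statevar^\oraclesymbol$, and the $\action=0$ case is the same telescoping without the success and privacy terms. The delicate points, and the reason this is the hard step, are that all three oracle-dependent channels must push in the same cost-decreasing direction — which is precisely what (M1), (M2), and the monotonicity of $\successfn(\cdot,1)$ guarantee — and that the learner-state and oracle-state inductions must be run simultaneously, since the oracle-state argument consumes the learner-state conclusion one horizon earlier.
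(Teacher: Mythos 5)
Your proof is correct, and its overall architecture is the same as the paper's: backward induction on the Bellman recursion, with the terminal cost $\valuefn_0(\statevar)=\learningcost(\statevar^\learnersymbol)$ as base case, learner-state monotonicity from the fact that $\privacycost$ and $\successfn$ do not depend on $\statevar^\learnersymbol$, monotonicity in $\dpindex$ by comparing against the free idle action $\action=0$, and oracle-state monotonicity from (M1) and (M2). Where you genuinely diverge — and in fact improve on the paper — is the oracle-state step. The paper's proof replaces $\probability(\cdot\mid\statevar^\oraclesymbol)$ by $\probability(\cdot\mid\statevar^\oraclesymbol-1)$ and $\privacycost(\action,\statevar^\oraclesymbol)$ by $\privacycost(\action,\statevar^\oraclesymbol-1)$, but silently keeps $\successfn(\statevar^\oraclesymbol,\action)$ inside the resulting expression and declares it equal to $\valuefn_{\dpindex+1}\left([\statevar^\oraclesymbol-1,\statevar^\learnersymbol]\right)$; strictly, that final equality also requires swapping $\successfn(\statevar^\oraclesymbol,\action)$ for $\successfn(\statevar^\oraclesymbol-1,\action)$, and that swap preserves the inequality chain only if $\successfn(\cdot,1)$ is nondecreasing in the oracle state, because by learner-state monotonicity the weight $\successfn$ multiplies the smaller of the two successor values. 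Your telescoping decomposition of $\sum_o \probability(o\mid i)h(o,i)-\sum_o\probability(o\mid j)h(o,j)$ makes this third channel explicit, and you are right that the needed monotonicity of $\successfn$ in $\statevar^\oraclesymbol$ is nowhere in (M1)--(M4); it is only implicit in the paper's modeling (a better oracle state means lower noise, and the empirical $\successfn(\statevar^\oraclesymbol,1)=[0.1,0.43,0.95]$ is indeed increasing). You also correctly surface a structural dependency the paper never states: the oracle-state induction consumes the learner-state conclusion at horizon $\dpindex-1$, so the two monotonicities must be carried as a joint induction hypothesis rather than proved separately as the paper's presentation suggests. In short, same route, but your version closes a small gap in the paper's own argument.
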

\begin{proof}
The strategy for proving the monotonicity of the value function with respect to the state space variables will be to use induction and assumptions about the cost function and the probability transition matrix. 

The recursion for $\valuefn_{\dpindex+1}\left([\statevar^\oraclesymbol, \statevar^\learnersymbol]\right)$ from (\ref{eq:valuefn}) is,  
\begin{align*}
    &\valuefn_{\dpindex+1}\left(\statevar = [\statevar^\oraclesymbol, \statevar^\learnersymbol]\right) = \min_{\action \in \actionspace} \privacycost(\action,\statevar^\oraclesymbol) +\sum_{\substack{\statevar^{\oraclesymbol^\prime} \in \statespace^\oraclesymbol}} \probability(\statevar^{\oraclesymbol^\prime}|\statevar^{\oraclesymbol})\left(\successfn(\statevar^\oraclesymbol,\action)\valuefn_{\dpindex}\left([\statevar^{\oraclesymbol^\prime}, \statevar^\learnersymbol - \action]\right) + (1 - \successfn(\statevar^\oraclesymbol,\action))\valuefn_{\dpindex}\left([\statevar^{\oraclesymbol^\prime}, \statevar^\learnersymbol]\right) \right),
\end{align*}
where $V_{0}\left([\statevar^{\oraclesymbol^\prime}, \statevar^L]\right) = l(\statevar^L)$. 

\textbf{Monotonicity in $n$:} 
The first step of the induction, $\valuefn_{1}\left([\statevar^\oraclesymbol, \statevar^\learnersymbol]\right)  \leq \valuefn_{0}\left([\statevar^{\oraclesymbol^\prime}, \statevar^\learnersymbol]\right) $ can be shown as,
\begin{align*}
\valuefn_{1}\left([\statevar^\oraclesymbol, \statevar^\learnersymbol]\right) = \min_{\action \in \actionspace} \privacycost(\action,\statevar^\oraclesymbol) + \sum_{\substack{\statevar^{\oraclesymbol^\prime} \in \statespace^\oraclesymbol}} \probability(\statevar^{\oraclesymbol^\prime}|\statevar^{\oraclesymbol})\left(\successfn(\statevar^\oraclesymbol,\action)\learningcost(\statevar^\learnersymbol) \right.&\left. + (1 - \successfn(\statevar^\oraclesymbol,\action))\learningcost(\statevar^\learnersymbol) \right) \\ &\leq \stateactionfn_{1}\left([\statevar^\oraclesymbol, \statevar^\learnersymbol],0\right) = \learningcost(\statevar^\learnersymbol) = \valuefn_{0}\left([\statevar^{\oraclesymbol^\prime}, \statevar^\learnersymbol]\right). 
\end{align*}
Now let  $\valuefn_{\dpindex}\left([\statevar^{\oraclesymbol^\prime}, \statevar^\learnersymbol]\right) \leq \valuefn_{\dpindex -1}\left([\statevar^{\oraclesymbol^\prime}, \statevar^\learnersymbol]\right)$ , then using the recursion and the monotonicity of cost it is straightforward to show that it holds true for $\dpindex + 1$ since,
\begin{align*}
\valuefn_{\dpindex+1}\left([\statevar^{\oraclesymbol}, \statevar^\learnersymbol]\right)  \leq \min_{\action \in \actionspace} \privacycost(\action ,\statevar^\oraclesymbol) &+\sum_{\substack{\statevar^{\oraclesymbol^\prime} \in \statespace^\oraclesymbol}} \probability(\statevar^{\oraclesymbol^\prime}|\statevar^{\oraclesymbol})\left(\successfn(\statevar^\oraclesymbol,\action )\valuefn_{\dpindex -1}\left([\statevar^{\oraclesymbol^\prime}, \statevar^\learnersymbol - 0]\right) + (1 - \successfn(\statevar^\oraclesymbol,\action ))\valuefn_{\dpindex-1}\left([\statevar^{\oraclesymbol^\prime}, \statevar^\learnersymbol]\right) \right),\\& = 
\valuefn_{\dpindex}\left([\statevar^{\oraclesymbol},\statevar^\learnersymbol]\right)
\end{align*}

\textbf{Monotonicity in $\statevar^\learnersymbol$: }
We use inductive reasoning again to prove that the value function is increasing in $\statevar^\learnersymbol$.
Note that $\valuefn_0\left([\statevar^\oraclesymbol, \statevar^\learnersymbol ]\right) = \learningcost(\statevar^\learnersymbol)$ is increasing in $\statevar^\learnersymbol$. Let $\valuefn_\dpindex\left([\statevar^\oraclesymbol, \statevar^\learnersymbol ]\right)$ be increasing in $\statevar^\learnersymbol$. From the definition of $\valuefn_{\dpindex +1}\left([\statevar^\oraclesymbol, \statevar^\learnersymbol ]\right)$ it follows that $\valuefn_{\dpindex +1}$ is sum of increasing functions of $\statevar^\learnersymbol$ hence it should also be increasing in $\statevar^\learnersymbol$.

\textbf{Monotonicity in $\statevar^\oraclesymbol$: }
Using the assumption of first order stochastic dominance on $\probability(\statevar^{\oraclesymbol^\prime}|\statevar^{\oraclesymbol})$ we prove that $\valuefn_{\dpindex}\left([\statevar^\oraclesymbol, \statevar^\learnersymbol]\right)$ is non-increasing in $\statevar^\oraclesymbol$.  
$\valuefn_{0}\left([\statevar^\oraclesymbol, \statevar^\learnersymbol]\right) = \learningcost(\statevar^\learnersymbol)$ is non-increasing in $\statevar^\oraclesymbol$. Assume $\valuefn_{\dpindex}\left([\statevar^\oraclesymbol, \statevar^\learnersymbol]\right)$ is non-increasing. 

Then, for $\statevar^\oraclesymbol>1$, by monotonicity of $\privacycost$, $\privacycost(\action,\statevar^\oraclesymbol) \leq \privacycost(\action,\statevar^\oraclesymbol - 1)$. And by the first-order stochastic dominance assumption and induction assumption, 
 $ \sum_{\substack{\statevar^{\oraclesymbol^\prime} \in \statespace^\oraclesymbol}} \probability(\statevar^{\oraclesymbol^\prime}|\statevar^{\oraclesymbol} )\valuefn_{\dpindex}\left([\statevar^{\oraclesymbol^\prime}, \statevar^\learnersymbol]\right) \leq \sum_{\substack{\statevar^{\oraclesymbol^\prime} \in \statespace^\oraclesymbol}} \probability(\statevar^{\oraclesymbol^\prime}|\statevar^{\oraclesymbol} -1)\valuefn_{\dpindex}\left([\statevar^{\oraclesymbol^\prime}, \statevar^\learnersymbol]\right)$

Then for $\statevar^\oraclesymbol>1$,
\begin{align*}
  &\valuefn_{\dpindex+1}\left(\statevar = [\statevar^\oraclesymbol, \statevar^\learnersymbol]\right) = \min_{\action \in \actionspace} \privacycost(\action,\statevar^\oraclesymbol) +\sum_{\substack{\statevar^{\oraclesymbol^\prime} \in \statespace^\oraclesymbol}} \probability(\statevar^{\oraclesymbol^\prime}|\statevar^{\oraclesymbol})\left(\successfn(\statevar^\oraclesymbol,\action)\valuefn_{\dpindex}\left([\statevar^{\oraclesymbol^\prime}, \statevar^\learnersymbol - \action]\right) + (1 - \successfn(\statevar^\oraclesymbol,\action))\valuefn_{\dpindex}\left([\statevar^{\oraclesymbol^\prime}, \statevar^\learnersymbol]\right) \right), \\& \leq \min_{\action \in \actionspace} \privacycost(\action,\statevar^\oraclesymbol - 1) + \sum_{\substack{\statevar^{\oraclesymbol^\prime} \in \statespace^\oraclesymbol}} \probability(\statevar^{\oraclesymbol^\prime}|\statevar^{\oraclesymbol} - 1 )\left(\successfn(\statevar^\oraclesymbol,\action)\valuefn_{\dpindex}\left([\statevar^{\oraclesymbol^\prime}, \statevar^\learnersymbol - \action]\right) + (1 - \successfn(\statevar^\oraclesymbol,\action))\valuefn_{\dpindex}\left([\statevar^{\oraclesymbol^\prime}, \statevar^\learnersymbol]\right) \right) \\&= \valuefn_{\dpindex+1}\left(\statevar = [\statevar^\oraclesymbol - 1, \statevar^\learnersymbol]\right),
\end{align*}
 \end{proof}           
\textbf{Proof for the infinite horizon discounted MDP:} 
Since instantaneous cost $\cost$ is bounded. Hence the value function sequence, 
\begin{align*}
\valuefn_{\dpindex+1}^\discountfactor\left([\statevar^\oraclesymbol, \statevar^\learnersymbol, \statevar^\queuesymbol]\right) = \min_{\action \in \actionspace}\left[ \cost(\action,\statevar;\lambda) +\discountfactor\sum_{\substack{\statevar^{\oraclesymbol^\prime} \in \statespace^\oraclesymbol \\ \statevar^{\queuesymbol^\prime}\in \statespace^\queuesymbol}} \probability(\statevar^{\oraclesymbol^\prime}|\statevar^{\oraclesymbol}) \probability(\statevar^{\queuesymbol^\prime}) \right.&\left. \left(\successfn(\statevar^\oraclesymbol,\action)\valuefn_{\dpindex}^\discountfactor\left([\statevar^{\oraclesymbol^\prime}, \statevar^\learnersymbol + \statevar^\queuesymbol - \action, \statevar^{\queuesymbol^\prime}]\right) + \right. \right.\\& \left. \left. (1 - \successfn(\statevar^\oraclesymbol,\action))\valuefn_{\dpindex}^\discountfactor\left([\statevar^{\oraclesymbol^\prime}, \statevar^\learnersymbol + \statevar^\queuesymbol, \statevar^{\queuesymbol^\prime}]\right) \right) \right],
\end{align*}
converges for any initial $\valuefn_{0}^\discountfactor\left([\statevar^\oraclesymbol, \statevar^\learnersymbol, \statevar^\queuesymbol]\right)$. Hence we choose a $\valuefn_{0}^\discountfactor\left([\statevar^\oraclesymbol, \statevar^\learnersymbol, \statevar^\queuesymbol]\right)$ which is increasing in $\statevar^\learnersymbol, \statevar^\queuesymbol$ and decreasing in $\statevar^\oraclesymbol$. Note that by assumptions on $\privacycost$ and $\learningcost$, $\cost(\action,\statevar;\lambda)$ is decreasing in $\statevar^\oraclesymbol$ and nondecreasing in $\statevar^\learnersymbol$. Therefore by induction $\valuefn_{\dpindex}^\discountfactor\left([\statevar^\oraclesymbol, \statevar^\learnersymbol, \statevar^\queuesymbol]\right)$ is increasing in $\statevar^\learnersymbol$ and $\statevar^\queuesymbol$. And by assumption of first order stochastic dominance on $\probability(\statevar^{\oraclesymbol^\prime}|\statevar^{\oraclesymbol})$ it is easy to see that $\valuefn_{\dpindex}^\discountfactor\left([\statevar^\oraclesymbol, \statevar^\learnersymbol, \statevar^\queuesymbol]\right)$ is decreasing in $\statevar^\oraclesymbol$. Therefore $\valuefn^\discountfactor\left([\statevar^\oraclesymbol, \statevar^\learnersymbol, \statevar^\queuesymbol]\right) = \valuefn_{\infty}^\discountfactor\left([\statevar^\oraclesymbol, \statevar^\learnersymbol, \statevar^\queuesymbol]\right)$ is increasing in $\statevar^\learnersymbol$ and $\statevar^\queuesymbol$ and decreasing in $\statevar^\oraclesymbol$. 

We use this result on $\valuefn^\discountfactor\left([\statevar^\oraclesymbol, \statevar^\learnersymbol, \statevar^\queuesymbol]\right)$ in the discussion of structural results on the infinite horizon average cost MDP.

We now prove Theorem~\ref{th:thresholdpolicy}:
\begin{proof}
To show that the optimal discounted cost policy is monotonically increasing in learner state $\statevar^\learnersymbol$, we will prove inductively that $\stateactionfn_{\numqueries}\left([\statevar^\oraclesymbol, \statevar^\learnersymbol],\action \right)$ is submodular in $(\statevar^\learnersymbol,\action)$ for all $\statevar^\learnersymbol \geq 1$. In other words, we prove that, 
\begin{align*}
\stateactionfn_{\dpindex+1}\left([\statevar^\oraclesymbol, \statevar^\learnersymbol], 1\right) - \stateactionfn_{\dpindex+1}\left([\statevar^\oraclesymbol, \statevar^\learnersymbol], 0\right) = \privacycost(1,\statevar^\oraclesymbol) - \privacycost(0,\statevar^\oraclesymbol)  &+ \sum_{\substack{\statevar^{\oraclesymbol^\prime} \in \statespace^\oraclesymbol}}\probability(\statevar^{\oraclesymbol^\prime}|\statevar^{\oraclesymbol}) \successfn(\statevar^\oraclesymbol,1) \\&\times\left[\valuefn_{\dpindex}\left([\statevar^{\oraclesymbol^\prime}, \statevar^\learnersymbol - 1]\right) - \valuefn_{\dpindex}\left([\statevar^{\oraclesymbol^\prime}, \statevar^\learnersymbol]\right)\right]
\end{align*}
is monotonically decreasing in the learner state $\statevar^\learnersymbol$ for $\statevar \geq 1$ for all $\dpindex \geq 0$ for a suitable initialization. This is a sufficient condition for a monotone threshold policy since if the state action ($\stateactionfn$) decreases monotonically. It will change its sign over the learner state space $\statespace^\learnersymbol$ only once, and the action $0$ will be optimal until a certain value of $\statevar^\learnersymbol$ and the action $1$ will be optimal otherwise.

$\stateactionfn_{\dpindex+1}\left([\statevar^\oraclesymbol, \statevar^\learnersymbol], 1\right) - \stateactionfn_{\dpindex+1}\left([\statevar^\oraclesymbol, \statevar^\learnersymbol], 0\right)$  has increasing differences in $\statevar^\learnersymbol$ if, $\valuefn_{\dpindex}\left([\statevar^{\oraclesymbol}, \statevar^\learnersymbol]\right)$ has increasing difference in $\statevar^\learnersymbol$. We prove this inductively. By assumption of integer convexity, $\valuefn_{0}\left([\statevar^{\oraclesymbol}, \statevar^\learnersymbol]\right) = \learningcost(\statevar^\learnersymbol)$, has increasing differences in $\statevar^\learnersymbol$. Assume $\valuefn_{\dpindex}\left([\statevar^{\oraclesymbol}, \statevar^\learnersymbol]\right)$ has increasing differences in $\statevar^\learnersymbol$ then $\stateactionfn_{\dpindex+1}\left([\statevar^\oraclesymbol, \statevar^\learnersymbol], \action \right)$ is submodular in $(\statevar^\learnersymbol,\action)$. We will now show that $\valuefn_{\dpindex+1}\left([\statevar^{\oraclesymbol}, \statevar^\learnersymbol]\right)$ has increasing differences in $\statevar^\learnersymbol$, i.e.
\begin{align}\label{eq:increasingdiff}
    \valuefn_{\dpindex+1}\left([\statevar^{\oraclesymbol}, \statevar^\learnersymbol + 1]\right) - \valuefn_{\dpindex+1}\left([\statevar^{\oraclesymbol}, \statevar^\learnersymbol]\right) - \left( \valuefn_{\dpindex+1}\left([\statevar^{\oraclesymbol}, \statevar^\learnersymbol]\right) - \valuefn_{\dpindex+1}\left([\statevar^{\oraclesymbol}, \statevar^\learnersymbol - 1]\right) \right) \geq 0.
\end{align}
Now let $\stateactionfn_{\dpindex+1}\left([\statevar^\oraclesymbol, \statevar^\learnersymbol], \action_1 \right) = \valuefn_{\dpindex+1}\left([\statevar^{\oraclesymbol}, \statevar^\learnersymbol + 1]\right)$, $\stateactionfn_{\dpindex+1}\left([\statevar^\oraclesymbol, \statevar^\learnersymbol], \action_2 \right) = \valuefn_{\dpindex+1}\left([\statevar^{\oraclesymbol}, \statevar^\learnersymbol]\right)$ and $\stateactionfn_{\dpindex+1}\left([\statevar^\oraclesymbol, \statevar^\learnersymbol], \action_3 \right) = \valuefn_{\dpindex+1}\left([\statevar^{\oraclesymbol}, \statevar^\learnersymbol - 1]\right)$
 for some actions $\action_1, \action_2$ and $\action_3$. 
 Now (\ref{eq:increasingdiff}) can be written as,
 \begin{align*}
     &\stateactionfn_{\dpindex+1}\left([\statevar^O, \statevar^\learnersymbol + 1], \action_1 \right) - \stateactionfn_{\dpindex+1}\left([\statevar^\oraclesymbol, \statevar^\learnersymbol], \action_1 \right) - \left(\stateactionfn_{\dpindex+1}\left([\statevar^\oraclesymbol, \statevar^\learnersymbol], \action_1 \right) - \stateactionfn_{\dpindex+1}\left([\statevar^\oraclesymbol, \statevar^\learnersymbol - 1], \action_2 \right) \right) \geq 0 \iff \\
     &\underbrace{\left(\stateactionfn_{\dpindex+1}\left([\statevar^\oraclesymbol, \statevar^\learnersymbol + 1], \action_1 \right) - \stateactionfn_{\dpindex+1}\left([\statevar^\oraclesymbol, \statevar^\learnersymbol], \action_1 \right) \right)}_{A} -  \underbrace{ \left(\stateactionfn_{\dpindex+1}\left([\statevar^\oraclesymbol, \statevar^\learnersymbol], \action_1 \right) - \stateactionfn_{\dpindex+1}\left([\statevar^\oraclesymbol, \statevar^\learnersymbol], \action_2 \right) \right)}_{\text{By optimality} \geq 0}\\
     &-\underbrace{\left(\left(\stateactionfn_{\dpindex+1}\left([\statevar^\oraclesymbol, \statevar^\learnersymbol], \action_2 \right) - \stateactionfn_{\dpindex+1}\left([\statevar^\oraclesymbol, \statevar^\learnersymbol], \action_3 \right) \right) \right)}_{\text{By optimality} \geq 0}  -\underbrace{\left(\left(\stateactionfn_{\dpindex+1}\left([\statevar^\oraclesymbol, \statevar^\learnersymbol], \action_3 \right) - \stateactionfn_{\dpindex+1}\left([\statevar^\oraclesymbol, \statevar^\learnersymbol-1], \action_3 \right) \right)\right)}_{B}
     \geq 0.
 \end{align*}
 Now rearranging the terms for $A$, 
 \begin{align*}
     A &=  \sum_{\substack{\statevar^{\oraclesymbol^\prime} \in \statespace^\oraclesymbol}}\probability(\statevar^{\oraclesymbol^\prime}|\statevar^{\oraclesymbol})  \times  \left[\successfn(\statevar^\oraclesymbol,\action_2)]\left(\valuefn_{\dpindex}\left([\statevar^{\oraclesymbol^\prime}, \statevar^\learnersymbol ]\right) - \valuefn_{\dpindex}\left([\statevar^{\oraclesymbol^\prime}, \statevar^\learnersymbol - 1]\right)\right)\right.\\ &\left. + (1 - \successfn(\statevar^\oraclesymbol,\action_2))\left(\valuefn_{\dpindex}\left([\statevar^{\oraclesymbol^\prime}, \statevar^\learnersymbol + 1 ]\right) - \valuefn_{\dpindex}\left([\statevar^{\oraclesymbol^\prime}, \statevar^\learnersymbol]\right)\right) \right] \geq   \\& \sum_{\substack{\statevar^{\oraclesymbol^\prime} \in \statespace^\oraclesymbol}}\probability(\statevar^{\oraclesymbol^\prime}|\statevar^{\oraclesymbol})\times  \left[\valuefn_{\dpindex}\left([\statevar^{\oraclesymbol^\prime}, \statevar^\learnersymbol]\right) - \valuefn_{\dpindex}\left([\statevar^{\oraclesymbol^\prime}, \statevar^\learnersymbol - 1]\right)\right] \geq B
 \end{align*}
 The second last inequality is due to induction on $\valuefn_{\dpindex}\left([\statevar^{\oraclesymbol}, \statevar^\learnersymbol]\right)$ and the last inequality follows from similar expansion on $B$ and induction hypothesis. This theorem can be straightforwardly extended to infinite horizon discounted MDP. 
\end{proof}
\subsection{On threshold structure of average Lagrangian cost  optimal policy}\label{sec:thresavgpolicy}
To show that the optimal policy of the unconstrained average cost MDP has a threshold structure, we first state the following lemma~\citep{sennott_average_1989}.

\begin{lemma}\label{lemma:limitpolicy}
    Let $(\discountfactor_\timeindex)$ be any increasing sequence of discount factors, s.t., $\lim_{\timeindex \to \infty} \discountfactor_\timeindex = 1$. Let ($\policy^*_{\discountfactor_\timeindex}$) be the associated sequence of discounted optimal stationary policies. There exist a subsequence $(\discountfactoralt_\timeindex)$ of $(\discountfactor_\timeindex) $ and a stationary policy $\policy$ that is the limit of $(\policy^*_{\discountfactoralt_\timeindex})$. 
\end{lemma}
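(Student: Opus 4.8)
The plan is to prove this as a compactness statement about the space of deterministic stationary policies, exploiting that the action space is finite and the state space is only countable.

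First I would note that for each discount factor $\discountfactor_\timeindex < 1$ the discounted-cost problem has bounded per-stage cost, so by the standard theory of discounted MDPs there is an optimal \emph{deterministic} stationary policy; moreover the extension of Theorem~\ref{th:thresholdpolicy} to the discounted case shows $\policy^*_{\discountfactor_\timeindex}$ is in fact a deterministic threshold policy. Hence each $\policy^*_{\discountfactor_\timeindex}$ may be viewed as a point of the product space $\actionspace^\statespace = \{0,1\}^\statespace$, and the desired ``limit of policies'' is convergence in the product (i.e.\ pointwise-in-state) topology.

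Next I would run a diagonal extraction over the countable state space $\statespace = \statespace_\oraclesymbol \times \statespace_\learnersymbol \times \statespace_\queuesymbol$, which is countable because $\statespace_\learnersymbol$ is denumerable and the other two factors are finite. Enumerating the states as $s_1, s_2, \dots$, the sequence $\bigl(\policy^*_{\discountfactor_\timeindex}(s_1)\bigr)_\timeindex$ takes values in the finite set $\{0,1\}$, so it is eventually constant along some subsequence; passing to that subsequence and repeating for $s_2, s_3, \dots$, the diagonal subsequence $(\discountfactoralt_\timeindex)$ makes $\policy^*_{\discountfactoralt_\timeindex}(s)$ stabilize for every state $s$. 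Equivalently, one may invoke Tychonoff's theorem: $\{0,1\}^\statespace$ is compact in the product topology and, since $\statespace$ is countable, metrizable and hence sequentially compact, so $(\policy^*_{\discountfactor_\timeindex})$ has a convergent subsequence.

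Finally I would define $\policy(s) \overset{\Delta}{=} \lim_\timeindex \policy^*_{\discountfactoralt_\timeindex}(s)$, which is a well-defined deterministic stationary policy and is, by construction, the pointwise limit of $\policy^*_{\discountfactoralt_\timeindex}$. The only real obstacle here is the infinite cardinality of $\statespace$: for a finite state space the policy set would itself be finite and the claim immediate, so all the content sits in the diagonal/Tychonoff step that produces a single subsequence working simultaneously at every state. I would also make explicit that ``limit of policies'' is meant in this pointwise sense, since that is exactly what is needed downstream to transfer both the threshold structure and, through a vanishing-discount argument relating (\ref{eq:discountedmdp}) to (\ref{eq:lagrangianavgcostpolicy}), average-cost optimality from the discounted policies to $\policy$.
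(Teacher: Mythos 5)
Your proof is correct and matches the paper's treatment: the paper does not prove this lemma itself but imports it from Sennott (1989), and the diagonal-extraction argument you give (equivalently, sequential compactness of $\{0,1\}^{\statespace}$ in the product topology, using the finite action set and the countable state space $\statespace = \statespace_\oraclesymbol \times \statespace_\learnersymbol \times \statespace_\queuesymbol$) is exactly the standard proof in that reference. Your pointwise notion of policy convergence is also the one the paper needs downstream to transfer the threshold structure and average-cost optimality to the limit policy, so nothing is missing.
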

An optimal policy given by Lemma~\ref{lemma:limitpolicy} is an average cost optimal policy under suitable assumptions~\citep{sennott_average_1989}.
We verify these assumptions and characterize the average cost optimal policy in the following theorem:
\begin{theorem}
    Any stationary deterministic policy $\policy$ given by Lemma~\ref{lemma:limitpolicy} is an average cost optimal policy. In particular there exists a constant $\constavgcost = \lim_{\discountfactor \to 1}(1 - \discountfactor)\valuefn^{\discountfactor}(\statevar)$ for every $\statevar$, and function $\avgcostfunctionth(\statevar)$ with $-N \leq \avgcostfunctionth(\statevar) \leq M_\statevar$, such that, \begin{align*}
        \constavgcost + \avgcostfunctionth(\statevar) = \min_{\action \in \actionspace}\left\{ 
        \cost(\action,\statevar;\lagrange)
         + \sum_{\statevar^{'} \in \statespace} \probability(\statevar^{'}|\statevar,\action)\avgcostfunctionth(\statevar)\right\}.
    \end{align*}
    Furthermore, the stationary policy is average cost optimal with an average cost $\constavgcost$.
\end{theorem}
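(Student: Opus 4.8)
The plan is to establish the average cost optimality equation (ACOE) by the vanishing-discount method, verifying the three sufficient conditions of \citet{sennott_average_1989} for the discounted Lagrangian MDP of (\ref{eq:discountedmdp}) and then passing to the limit along the subsequence supplied by Lemma~\ref{lemma:limitpolicy}. Fix a reference state $\statevar_0 = (\statevar^\oraclesymbol_0,0,0)$ with an empty queue and write the relative value function $h_\discountfactor(\statevar) = \valuefn^\discountfactor(\statevar) - \valuefn^\discountfactor(\statevar_0)$. The conditions to check are: (i) $\valuefn^\discountfactor(\statevar) < \infty$ for every state and every $\discountfactor < 1$; (ii) a uniform lower bound $h_\discountfactor(\statevar) \geq -N$; and (iii) a uniform upper bound $h_\discountfactor(\statevar) \leq M_\statevar$ together with the summability $\sum_{\statevar'} \probability(\statevar'|\statevar,\action) M_{\statevar'} < \infty$ for each admissible $\action$.

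Condition (i) is immediate, since the instantaneous cost $\cost(\action,\statevar^\oraclesymbol;\lagrange) = \privacycost + \lagrange \learningcost$ depends only on the finite oracle state and the action and is therefore bounded by some $\cost_{\max}$, giving $\valuefn^\discountfactor(\statevar) \leq \cost_{\max}/(1-\discountfactor)$. For the lower bound (ii) I would invoke the monotonicity of the discounted value function established before the proof of Theorem~\ref{th:thresholdpolicy} (increasing in $\statevar^\learnersymbol$ and $\statevar^\queuesymbol$, decreasing in $\statevar^\oraclesymbol$). Decomposing $h_\discountfactor(\statevar) = [\valuefn^\discountfactor(\statevar^\oraclesymbol,\statevar^\learnersymbol,\statevar^\queuesymbol) - \valuefn^\discountfactor(\statevar^\oraclesymbol_0,\statevar^\learnersymbol,\statevar^\queuesymbol)] + [\valuefn^\discountfactor(\statevar^\oraclesymbol_0,\statevar^\learnersymbol,\statevar^\queuesymbol) - \valuefn^\discountfactor(\statevar_0)]$, the second bracket is nonnegative by monotonicity in $\statevar^\learnersymbol$ and $\statevar^\queuesymbol$, while the first bracket ranges over the finite oracle and queue spaces; a coupling of two oracle chains started at $\statevar^\oraclesymbol$ and $\statevar^\oraclesymbol_0$ (which merge in finite expected time since the oracle chain is finite and communicating, with dynamics independent of $\statevar^\learnersymbol$) bounds this difference below by a constant uniformly in $\discountfactor$, yielding the finite $N$.

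The crux is condition (iii). Here I would bound $h_\discountfactor(\statevar)$ by the expected discounted cost of a suboptimal first-passage scheme that returns the system from $\statevar$ to $\statevar_0$: from $\statevar$ always choose $\action = 1$ (learn) until the queue empties, then follow the discounted-optimal policy. By suboptimality, $h_\discountfactor(\statevar) \leq \E\!\left[\text{cost accrued before } \statevar^\learnersymbol \text{ first hits } 0 \mid \statevar\right]$. Lemma~\ref{lemma:queue} forces a negative drift ($\queueavg \numsuccupdates < \successfn_{\min}$ follows from its hypothesis), so under ``always learn'' the expected first-passage time from learner state $\statevar^\learnersymbol$ to $0$ is finite and at most linear in $\statevar^\learnersymbol$; since the per-step cost is bounded, $M_\statevar = O(\statevar^\learnersymbol)$. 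The summability then holds because a one-step transition increases $\statevar^\learnersymbol$ by at most the bounded batch arrival $\numsuccupdates$, so the expectation of the (affine) function $M$ remains finite. Converting the queue-stability drift condition into this explicit linear growth bound on $M_\statevar$ and verifying summability is the step I expect to be the main obstacle.

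With (i)--(iii) in hand, Sennott's theorem applies: the limit $\constavgcost = \lim_{\discountfactor \to 1}(1-\discountfactor)\valuefn^\discountfactor(\statevar)$ exists and is independent of $\statevar$, the relative value $\avgcostfunctionth(\statevar) = \lim_{k} h_{\discountfactoralt_k}(\statevar)$ along the subsequence $(\discountfactoralt_k)$ of Lemma~\ref{lemma:limitpolicy} satisfies the stated ACOE with $-N \leq \avgcostfunctionth(\statevar) \leq M_\statevar$, and the limiting stationary policy $\policy$ is average-cost optimal. The final optimality claim is the standard verification argument: iterating the ACOE along any policy, summing, and dividing by $\numqueries$, the bound $-N \leq \avgcostfunctionth \leq M_\statevar$ makes the telescoped $\avgcostfunctionth$-terms vanish in the Cesàro limit, so every policy incurs average cost at least $\constavgcost$ while the policy achieving the minimizing action in the ACOE attains exactly $\constavgcost$.
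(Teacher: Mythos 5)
Your overall skeleton is the same as the paper's: verify Sennott's vanishing-discount conditions (finiteness of $\valuefn^{\discountfactor}$, a uniform lower bound and a state-dependent, summable upper bound on the relative value $\avgcostfunctionth^{\discountfactor}(\statevar)=\valuefn^{\discountfactor}(\statevar)-\valuefn^{\discountfactor}(0)$) and pass to the limit along the subsequence of Lemma~\ref{lemma:limitpolicy}. Your condition (i) via bounded instantaneous cost is fine (indeed cleaner than citing first-passage results), and your condition (iii) — always learn until the queue drains, with negative drift $\queueavg\numsuccupdates<\successfn_{\min}$ extracted from Lemma~\ref{lemma:queue} — is the spelled-out version of what the paper obtains by citing Sennott's Propositions 4(ii) and 5(i) under the always-transmit policy.

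The genuine gap is in your condition (ii). Because you fix an \emph{arbitrary} oracle component $\statevar^{\oraclesymbol}_0$ in the reference state, you are forced into a coupling argument for the oracle bracket, and you justify it by asserting the oracle chain is ``finite and communicating.'' Irreducibility is not among the paper's assumptions: (M1) only requires first-order stochastic dominance of the rows, which is satisfied, for instance, by the identity transition matrix, whose copies started in different states never merge — your uniform-in-$\discountfactor$ lower bound then fails. Moreover, even granting irreducibility, coupling \emph{controlled} value functions here is delicate: the two trajectories see different success probabilities $\successfn(\cdot,\action)$ before merging, so their learner components diverge and the cost difference is not controlled by the oracle coupling time alone. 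The paper sidesteps all of this by choosing the reference state to be the empty-queue, \emph{best}-oracle state $[0,\oraclestate,0]$: by the monotonicity of the discounted value function (the infinite-horizon extension of Lemma~\ref{lemma:monovalue} — increasing in $\statevar^{\learnersymbol},\statevar^{\queuesymbol}$, decreasing in $\statevar^{\oraclesymbol}$), this state minimizes $\valuefn^{\discountfactor}$ over $\statespace$, so $\avgcostfunctionth^{\discountfactor}(\statevar)\geq 0$ for every $\statevar$ and $\discountfactor$, i.e., Assumption 2 holds with $N=0$ and no coupling is needed. The same choice also cleans up a loose end in your (iii): your stopping rule ends when the learner queue empties, at which point the oracle component is random rather than equal to the reference value, leaving a residual term $\avgcostfunctionth^{\discountfactor}$ evaluated at a random queue-empty state; the first passage should be taken to the full reference state (this is precisely what the cited Sennott propositions package), not merely to the event $\{\statevar^{\learnersymbol}=0\}$.
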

\begin{proof}
    For any stationary policy $\policy$ to be average cost optimal, the following assumptions need to be satisfied~\citep{sennott_average_1989}:
    \begin{itemize}
        \item Assumption 1: For every state $\statevar$ and discount factor $\discountfactor$, the optimal discounted cost $\valuefn^\discountfactor(\statevar)$ is finite.
        \item Assumption 2: There exists $N \geq 0$ such that, $-N \leq \avgcostfunctionth^\discountfactor(\statevar)  \overset{\Delta}{=} \valuefn^{\discountfactor}(\statevar) - \valuefn^{\discountfactor}(0)$ where $0$ is a reference state. 
        \item Assumption 3: There exists $M_\statevar \geq 0$, such that, $\avgcostfunctionth^\discountfactor(\statevar) \leq M_{\statevar}$ for every $\statevar$ and $\discountfactor$. For every $\statevar$ there exists, $\action$ such that $\sum_{\statevar^{'} \in \statespace} \probability(\statevar^{'}|\statevar,\action) < \infty$.
        \item Assumption 3': Assumption 3 holds and $\sum_{\statevar^{'} \in \statespace}\probability(\statevar^{'}|\statevar,\action)M_{\statevar} < \infty$.
    \end{itemize}
For a reference state $0 = [0,\oraclestate,0]$, the policy of always transmitting induces a stable buffer, and the expected time and cost for the first passage to state $0$ are finite. Therefore by Proposition 5i) and 4ii) of~\citet{sennott_average_1989} and from~\citet{ross_introduction_2014} Assumption 1 and 3 are satisfied. Assumption 3' is satisfied by the probability transition given in (\ref{eq:probabilitytransition}). Assumption 2 is satisfied because $\valuefn^\discountfactor$ is increasing in $\statevar^\learnersymbol$, $\statevar^\queuesymbol$ and decreasing in $\statevar^\oraclesymbol$ and therefore $\valuefn^\discountfactor(\statevar) \geq \valuefn^\discountfactor(0) \ \forall \statevar \in \statespace$ as shown in Lemma~\ref{lemma:monovalue}.
\end{proof}
Due to the above lemma and theorem and using the discussion in the proof of Lemma~\ref{lemma:monovalue}, the average cost optimal policy inherits the monotone threshold structure of the discounted optimal policy. 
\end{document}